\pdfoutput=1
\documentclass{article}
\usepackage[preprint]{neurips_2026}

\usepackage[utf8]{inputenc}
\usepackage[T1]{fontenc}
\usepackage{hyperref}
\usepackage{url}
\usepackage{booktabs}
\usepackage{amsfonts}
\usepackage{amsmath}
\usepackage{amssymb}
\usepackage{amsthm}
\usepackage{nicefrac}
\usepackage{microtype}
\usepackage{xcolor}
\usepackage{graphicx}
\usepackage{algorithm}
\usepackage{algpseudocode}
\usepackage{enumitem}
\usepackage{multirow}

\newtheorem{lemma}{Lemma}
\newtheorem{remark}{Remark}

\newcommand{\R}{\mathbb{R}}
\newcommand{\E}{\mathbb{E}}
\newcommand{\Var}{\mathrm{Var}}
\newcommand{\softmax}{\mathrm{softmax}}
\newcommand{\softplus}{\mathrm{softplus}}
\newcommand{\lamb}{\lambda}
\newcommand{\lobs}{\lambda^{\mathrm{obs}}}
\newcommand{\lhat}{\hat{\lambda}}
\newcommand{\lpost}{\lambda'}
\newcommand{\lpred}{\lambda''}

\title{Precision Tracked Transformer via Kalman Filtering, Kriging and Process Noise}

\author{%
  Bo Long, Deepak Agarwal, Jelena Markovic-Voronov, Yi Wang, Liuqing Li \\
  LinkedIn Core AI
}

\begin{document}

\maketitle

\begin{abstract}
The Transformer is the foundational building block of modern AI, yet offers no principled handling of \emph{uncertainty}, which is prevalent in real applications: cold-start tokens with sparse histories in sequential recommendation, heterogeneous signal quality in language models, and attention sinks induced by unconstrained softmax. Every token is treated with uniform confidence. We show this uniformity is a degenerate case of our \emph{Bayesian Filtering Transformer} (BFT): attention becomes precision-weighted kriging, the residual connection becomes a Kalman update with adaptive gain, and the FFN becomes a dynamics model propagating precision via a Jacobian--plus--process-noise rule. Observation precision comes from a parameter-free Restricted Maximum Likelihood (REML) estimator with a conjugate Bayesian prior. BFT replaces any Transformer layer with negligible overhead. On sequential recommendation, BFT applied to three major architectures yields significant gains on six benchmarks, with the largest improvements on cold-start users and rare items where uncertainty is highest. On supervised fine-tuning of large language models with noisy data, BFT improves robustness in two regimes: noisy supervision (token-label corruption in question answering) and noisy context (retrieval-augmented QA with real RAG distractors). A single principled modification---restoring precision---unlocks substantial headroom across both classical sequence-modeling and modern LLM regimes.
\end{abstract}

\section{Introduction}
\label{sec:intro}

The Transformer~\citep{vaswani2017attention} is the dominant architecture in modern AI: language models, recommenders, vision encoders, multimodal systems. Its self-attention mechanism computes a weighted combination of values mediated by query--key similarities. Although this simple design has proven remarkably universal, it makes an implicit assumption that is limiting: \emph{every token is equally reliable}.

Real applications routinely violate this assumption.
In \textbf{generative recommendation}~\citep{zhai2024hstu}, user sequences contain a mixture of high-signal interactions (careful purchases) and noisy clicks, while items range from popular products with rich histories to cold-start items observed only a handful of times~\citep{chen2022denoising,wang2021denoising}. The current Transformer does not have a principled, model-based way to down-weight a noisy item or to express uncertainty about a cold item. Heuristics applied at the input-construction stage are widely employed but hard to tune. This is one of the main reasons why Transformer-based sequential recommenders have struggled to deliver strong performance in cold-start scenarios.

In \textbf{LLM fine-tuning on real-world data}, domain corpora contain tokens of uneven quality; a standard loss weights every token identically, giving low-signal tokens the same gradient pull as high-signal ones~\citep{li2020dividemix}. The same uniform-confidence pathology shows up elsewhere in modern LLMs: \emph{attention sinks}~\citep{xiao2024efficient,sun2024massive,qiu2025gated} concentrate attention mass on uninformative early tokens, and \emph{position uncertainty} grows with retrieval depth in long-context RAG~\citep{liu2024lost}---in both cases, the architecture has no native channel through which to say ``this position is unreliable.'' Bespoke fixes have been proposed for each of these phenomena---denoising masks, attention calibration, sigmoid gates, sink tokens---but they remain isolated patches. We show that a single Bayesian formulation that propagates uncertainty through estimates provides a holistic solution and addresses these failure modes together.

\paragraph{Contributions.}
\begin{enumerate}[leftmargin=*,itemsep=1pt,topsep=1pt]
  \item \textbf{A unifying framework (Section~\ref{sec:framework}).} We formulate the Transformer as a three-step Bayesian filter---\emph{observe, update, predict}---and identify the precision $\lamb{=}1$ degeneracy as the common structural cause of attention sinks, noisy-feedback corruption, and cold-start degradation.
  \item \textbf{An efficient, drop-in algorithm (Section~\ref{sec:algorithm}).} We derive BFT's observation precision from Restricted Maximum Likelihood with a conjugate Bayesian prior---\emph{parameter-free} and content-aware. Precision propagation through the FFN combines the delta method with per-dimension process noise, and an expanded-square identity makes the whole update scale at standard attention's peak memory. Drop-in integration requires replacing a single layer class.
  \item \textbf{Empirical validation across two impactful application domains (Section~\ref{sec:experiments}).} On generative recommendation across three architecturally distinct backbones---SASRec and BERT4Rec under softmax attention, HSTU under SiLU-gated attention, each instantiating BFT with the variance estimator the framework prescribes for its weighting scheme (REML for softmax-normalized weights, sandwich for non-normalized; Section~\ref{sec:framework-class})---and six standard benchmarks, BFT delivers significant gains, with the largest improvements where the framework predicts they should be: \emph{rare items and cold users}. On supervised fine-tuning of TinyLlama-1.1B, BFT improves robustness over standard SFT under both \emph{noisy supervision} (token-label corruption on SQuAD) and \emph{noisy context} (Contriever-retrieved RAG distractors on NQ-Open). The pattern is consistent: precision tracking helps most where signal-to-noise is lowest.
\end{enumerate}

\section{The Bayesian Filtering Transformer: Framework}
\label{sec:framework}

We first present BFT at the level of the \emph{framework}—what each layer does, in Kalman-filter terms, without specifying how individual quantities are estimated. Section~\ref{sec:algorithm} then instantiates the framework with concrete estimators.

\subsection{Bayesian filtering at a glance}

A Bayesian filter maintains, at each step, a belief about a latent state. The belief is a distribution with a mean (point estimate) and a precision (inverse variance, quantifying confidence). Each step consumes an observation and produces an updated belief via three operations: \emph{observe} $\to$ \emph{update} $\to$ \emph{predict}.
The \emph{observe} step produces an estimate $e$ from the current observation and assesses its precision $\lobs$. The \emph{update} step merges $e$ with the prior belief --- written as a (mean, precision) pair $(h, \lamb)$ --- via the Kalman gain $K = \lobs / (\lamb + \lobs)$, yielding posterior $h' = h + K \cdot e$ and posterior precision $\lamb' = \lamb + \lobs$. The \emph{predict} step evolves the state forward through a dynamics model $f$; in precision form, the prior precision at the next step is $\lamb/(\partial f / \partial h)^2$, reduced by process noise.

This is the textbook form of the (Extended) Kalman Filter~\citep{kalman1960new}. Its merits are well-known: the gain $K$ adapts automatically between ``trust the prior'' ($K \to 0$ when observations are unreliable) and ``trust the observation'' ($K \to 1$ when the prior is uncertain), with no learned gate; precision propagates through the dynamics without ad hoc normalization; and noise accumulates and dissipates at principled rates.

\subsection{Mapping the Transformer to the filter}

A pre-norm Transformer layer computes self-attention and an FFN, each preceded by a normalization $\mathrm{Norm}(\cdot)$ and followed by a residual connection:
\begin{equation}
h' = h + \mathrm{Attn}(\mathrm{Norm}(h)), \qquad h'' = h' + \mathrm{FFN}(\mathrm{Norm}(h')).
\end{equation}
We identify each operation with a Kalman-filter step:
\begin{itemize}[leftmargin=*,itemsep=1pt,topsep=1pt]
  \item \textbf{Self-attention is the \emph{observe} step.} Attention pools values $v_j$ from surrounding positions via weights $\alpha_{tj} = \softmax_j(q_t^\top k_j/\sqrt{d})$. This is a local weighted estimate of the state at position $t$ using neighboring observations—precisely the role of a kriging predictor in geostatistics~\citep{krige1951statistical,matheron1963principles} or a Nadaraya--Watson regressor in statistics~\citep{nadaraya1964estimating}. In both cases, the estimate carries not only a mean but a variance, which the standard Transformer discards.
  \item \textbf{The residual connection is the \emph{update} step.} The standard residual $h' = h + e$ with $e = \mathrm{Attn}(\mathrm{Norm}(h))$ is the $K{=}1$ instance of the Kalman update $h' = h + K \cdot e$. With $K<1$, the model can partially trust attention evidence.
  \item \textbf{The FFN is the \emph{predict} step.} The FFN transforms $h'$ to produce the input for the next layer. In the filter view, this is the dynamics model that evolves the state forward one step; its Jacobian should propagate precision to the next layer.
\end{itemize}
\noindent In this mapping, \emph{what is missing from the standard Transformer is precision}. Attention computes only the mean $e$, not the observation precision $\lobs$. The residual adds the full innovation with no gain. The FFN transforms the mean but never moves a precision forward. Precision is implicit and uniform: $\lamb_t = 1$ for every $t$. (BFT keeps the standard pre-norm $\mathrm{Norm}(\cdot)$ placement unchanged.)

\subsection{Restoring precision}

BFT augments each position with a per-dimension precision vector $\lamb_t \in \R^d_{>0}$ and instantiates all three steps. (Appendix~\ref{app:notation} summarizes notation.)

\paragraph{Observe step.} Self-attention now produces a kriging estimate of the latent state at position $t$. Let $\alpha_{tj} = \softmax_j(q_t^\top k_j/\sqrt{d_k})$ be the standard softmax attention weights and $\bar\lamb_j$ the (scalar) prior precision of position $j$. The kriging weights $\tilde\alpha$ combine relevance ($\alpha$) with reliability ($\bar\lamb$), renormalizing the product to sum to one:
\begin{equation}
\tilde\alpha_{tj} \;=\; \frac{\alpha_{tj}\, \bar\lamb_j}{\sum_{j'} \alpha_{tj'}\, \bar\lamb_{j'}},
\qquad
e_t \;=\; \sum_j \tilde\alpha_{tj}\, v_j.
\label{eq:kriging}
\end{equation}
The denominator enforces $\sum_j \tilde\alpha_{tj} = 1$, which is required for $e_t$ to be an unbiased weighted estimator of the latent state. A neighbor that is highly attended but unreliable (low $\bar\lamb_j$) is discounted; a moderately attended but very reliable neighbor can dominate. The observation precision $\lobs_t$ of $e_t$ is derived from the kriging residuals (Section~\ref{sec:algo-obs}).

\begin{remark}[Equivalent attention-logit form]
Since $\exp(a + \log\bar\lamb) = \bar\lamb\,\exp(a)$, Equation~\ref{eq:kriging} is equivalent to $\tilde\alpha_{tj} = \softmax_j(q_t^\top k_j/\sqrt{d_k} + \log\bar\lamb_j)$, i.e.\ \emph{precision enters the attention mechanism as an additive bias on the query--key logits}. This makes BFT implementable as a one-line modification to any softmax-attention layer and decouples relevance (the QK kernel) from reliability (the precision bias).
\end{remark}

\begin{remark}[Architecture-agnostic]
Equation~\ref{eq:kriging} modifies only the kriging weights and leaves the underlying relevance kernel intact. It drops into causal SDPA (SASRec), bidirectional masked attention (BERT4Rec), gated SDPA~\citep{qiu2025gated}, or SiLU-gated attention (HSTU): only the relevance kernel $\alpha$ changes across these; the precision channel $\bar\lamb$ is orthogonal.
\end{remark}

\paragraph{Update step.} The Kalman gain merges the kriging evidence with the prior:
\begin{equation}
K_t = \lobs_t \,\oslash\, (\lamb_t + \lobs_t),
\qquad
h'_t = h_t + K_t \odot e_t,
\qquad
\lpost_t = \lamb_t + \lobs_t.
\label{eq:update}
\end{equation}

\paragraph{Predict step.} The FFN evolves the state; the predicted precision $\lpred_t$ follows from the FFN Jacobian and a small process-noise term (Section~\ref{sec:algo-pred}):
\begin{equation}
h''_t = h'_t + \mathrm{FFN}(\mathrm{Norm}(h'_t)).
\label{eq:predict}
\end{equation}
Equation~\ref{eq:kriging} is the only change to the attention \emph{weights}; all four failure modes motivated in Section~\ref{sec:intro} are addressed through the downstream gain $K$, not through hand-crafted weight manipulations. We treat $\mathrm{Norm}(\cdot)$ as precision-preserving in the propagation rule (formal derivation in Appendix~\ref{app:ekf}).

\begin{remark}[Standard Transformer as a degenerate BFT]
At $\lamb_t \equiv 1$ and zero process noise, Equations~\ref{eq:kriging}--\ref{eq:update} reduce to $\tilde\alpha = \alpha$, $K \equiv 1$, $h' = h + \mathrm{Attn}(\mathrm{Norm}(h))$, $h'' = h' + \mathrm{FFN}(\mathrm{Norm}(h'))$: the standard pre-norm Transformer layer. BFT strictly generalizes it.
\end{remark}

This framework is enough to \emph{state} what BFT is. To \emph{implement} it, three quantities must be specified: (i) the observation precision $\lobs$ from attention, (ii) the precision propagation $\lpred$ through the FFN, and (iii) the initial precision $\lamb$ at layer 0. We derive these in Section~\ref{sec:algorithm}.

\section{The BFT Algorithm}
\label{sec:algorithm}

We now instantiate the framework with concrete, efficient estimators. Each estimator is chosen so that BFT is \emph{parameter-free}—no learned noise floors, no learned gates, no calibration hyperparameters—and efficient, with the same peak memory as standard attention.

\subsection{Observation precision via REML with a Bayesian prior}
\label{sec:algo-obs}

The kriging estimate $e_t = \sum_j \tilde\alpha_{tj} v_j$ is a weighted sample mean over random weights. We need $\Var(e_t)$ to compute $\lobs_t = 1/\Var(e_t)$. We adopt two standard simplifications: (i) we track $\Var(e_t)$ as a diagonal (per-coordinate) approximation to the full covariance matrix---the same conditional-independence assumption discussed in Section~\ref{sec:discussion}; (ii) we plug in $\tilde\alpha_{tj}$ as if fixed, ignoring second-order softmax randomness. With these, Restricted Maximum Likelihood (REML), the classical estimator from spatial statistics~\citep{matheron1963principles}, yields the weighted sample variance with a Bessel-like correction; naively it reads
\begin{equation}
\widehat{\Var}_{\mathrm{naive}}(e_t) \;=\; \frac{\sum_j \tilde\alpha_{tj}\,(v_j - e_t)^2}{n_{\mathrm{eff},t} - 1}, \qquad n_{\mathrm{eff},t} = \frac{1}{\sum_j \tilde\alpha_{tj}^2}.
\label{eq:reml-naive}
\end{equation}
The Kish effective sample size $n_{\mathrm{eff}}$~\citep{kish1965survey} ranges from $1$ (sharp attention on one token) to $T$ (uniform). Two pathologies loom. When $n_{\mathrm{eff}} \to 1$ (e.g., causal attention at position $0$), the denominator vanishes. When neighbors have nearly identical values (early training, popular items), the numerator collapses to zero, driving $\lobs \to \infty$ and saturating $K = 1$—silently reverting BFT to the standard Transformer.

\paragraph{Bayesian regularization via a conjugate prior.} We place a conjugate inverse-$\chi^2$ prior on the per-coordinate value variance $\sigma^2$ (i.e.\ on $\Var(v_{j,i})$; see Appendix~\ref{app:reml} for the full generative model), with one pseudo-observation at scale $\sigma_0^2 = 1/d_k$:
\begin{equation}
\sigma^2 \sim \mathrm{Inv}\text{-}\chi^2(\nu=1,\ \sigma_0^2 = 1/d_k), \qquad \widehat{\Var}_{\mathrm{REML}}(e_t) \;=\; \frac{\sum_j \tilde\alpha_{tj}(v_j - e_t)^2 + \nu\sigma_0^2}{n_{\mathrm{eff},t} + \nu}.
\label{eq:reml}
\end{equation}
The prior is \emph{not learned}—both $\nu$ and $\sigma_0^2$ are fixed from first principles. The scale $\sigma_0^2 = 1/d_k$ equals the expected value-projection variance under Xavier initialization, so the prior is ``uninformed but correctly scaled.'' The strength $\nu=1$ is the minimum informative setting: one pseudo-observation, the gentlest regularization that still guarantees $\widehat{\Var} > 0$ and bounds $\lobs_t \le 2 d_k$ as $n_{\mathrm{eff}} \to 1$. At high $n_{\mathrm{eff}}$ the prior contributes negligibly and Equation~\ref{eq:reml} reduces to the weighted-sample form of Bessel's correction.

\paragraph{Content-awareness and position dependence.} Equation~\ref{eq:reml} is \emph{content-aware}: concentrating attention on a consistent neighbor (small residuals) is rewarded with high $\lobs$, while attending to a noisy neighbor (large residuals) yields low $\lobs$. Position dependence emerges automatically through $n_{\mathrm{eff},t}$, which varies naturally across positions under causal masking and attention patterns—no per-position learned parameter is required. The estimator is thus parameter-free: a single scalar prior, fixed from architecture, governs the entire observation-precision channel.

\paragraph{Efficient computation.} The weighted sum of squared residuals expands as
\begin{equation}
\sum_j \tilde\alpha_{tj}(v_j - e_t)^2 \;=\; \sum_j \tilde\alpha_{tj} v_j^2 \;-\; e_t^2,
\label{eq:reml-efficient}
\end{equation}
using $\sum_j \tilde\alpha_{tj} v_j = e_t$. Each term is a $(T\times T) \times (T \times d_h)$ matrix multiply; peak memory matches standard attention. No 5D tensor is materialized.

\subsection{Process precision: Jacobian propagation with bounded process noise}
\label{sec:algo-pred}

In the \emph{predict} step, we work with a standard two-layer FFN $\mathrm{FFN}(x) = W_2\,\phi(W_1 x + b_1)$ with $W_1 \in \R^{d_\text{ff} \times d}$, $W_2 \in \R^{d \times d_\text{ff}}$, and an element-wise activation $\phi$ (GELU in our experiments). The FFN's residual composition $g(h') = h' + \mathrm{FFN}(h')$ has total diagonal Jacobian $J_t = 1 + J_{\mathrm{FFN},t}$ where $J_{\mathrm{FFN},t} = \mathrm{diag}(\partial\,\mathrm{FFN}/\partial h'_t)$. The delta method gives $\Var(g(h'_t)) \approx J_t^2 \Var(h'_t)$, hence
\begin{equation}
\tilde\lpred_t \;=\; \lpost_t \,\oslash\, (1 + J_{\mathrm{FFN},t})^2.
\label{eq:delta}
\end{equation}

\paragraph{What the Jacobian does, intuitively.} The FFN performs projection pursuit between attention steps: $W_1$ projects onto candidate directions, $\phi'(\cdot)$ selects which directions are informative for the current token, and $W_2$ recombines. The diagonal Jacobian $J_{\mathrm{FFN},i} = [W_2]_{i,:}\,\mathrm{diag}(\phi'(a))\,[W_1]_{:,i}$ inherits this structure and controls how much each output dimension is shaped by the current input. Three regimes give the predict step its character. \emph{Inactive dimensions} (saturated or dead activations, $\phi'(a)\approx 0$): $J_{\mathrm{FFN}}\approx 0$, so $(1+J)^2 \approx 1$ and precision is preserved unchanged---the FFN judges these dimensions irrelevant for this token, and the next layer's gain on them is unaffected. \emph{Amplifying dimensions} ($|J|$ large): $\tilde\lpred = \lpost/(1+J)^2$ drops sharply, so the next layer's $K$ on those dimensions \emph{rises}---having stretched the state, the model becomes correspondingly less confident and more open to fresh attention evidence. \emph{Near-cancellation} ($J\approx -1$): $(1+J)^2 \to 0$ would blow precision up; we clip at $\epsilon = 0.01$. Combined with the kriging step, this yields a \emph{select-and-refine} cycle: kriging borrows information laterally with appropriate precision weighting, then the FFN identifies which dimensions carry the action and adjusts confidence dimension-by-dimension for the next iteration.

\emph{A concrete two-dimensional walk-through illustrating these regimes is in Appendix~\ref{app:concrete-example}.}

\paragraph{Process noise $Q$.} Under the delta method alone, precision accumulates monotonically across layers (a ``precision ratchet''), eventually saturating $K=0$ and freezing the model. In real Kalman filtering the process noise $Q>0$ bounds this accumulation: $\lpred = (1/\tilde\lpred + Q)^{-1}$. We learn a per-layer, per-dimension $Q$ parameterized as $Q = \softplus(Q_{\mathrm{logits}}) \cdot Q_{\max}$ with $Q_{\max}=1$ and $Q_{\mathrm{logits}}$ initialized at $-9$ (so $Q \approx 10^{-4}$ at start of training). The final predict-step update is
\begin{equation}
\lpred_t \;=\; \bigl(\, (1 + J_{\mathrm{FFN},t})^2 / \lpost_t \;+\; Q \,\bigr)^{-1}.
\label{eq:process-noise}
\end{equation}
At $Q \to 0$ Equation~\ref{eq:process-noise} recovers the pure delta method (and, together with $K=1$, the standard Transformer). Learning $Q$ is empirical-Bayes estimation of the dynamics noise: values the optimizer pushes large correspond to FFN dimensions whose evolution is least predictable.

\paragraph{Efficient Jacobian.} The diagonal of $J_{\mathrm{FFN},t}$ equals $C\,\phi'(W_1 h'_t + b_1)$ where $C = W_2 \odot W_1^\top \in \R^{d\times d_\text{ff}}$. Precomputing a rank-$r$ SVD of $C$ (we use $r=16$) reduces the per-token cost to $\approx 2\%$ of the FFN's forward pass. At inference, a running-average approximation eliminates the overhead entirely.

\subsection{Initial precision: learned per-token embedding}
\label{sec:algo-init}

The initial precision at layer $0$ is parameterized per input item:
\begin{equation}
\tau_i = \tau_{\mathrm{base}} + \softplus(p_i), \qquad p_i \in \R,\ \tau_{\mathrm{base}} = 1.
\label{eq:tau-init}
\end{equation}
Through backpropagation, items whose interaction patterns are consistent with the training signal have $p_i$ pushed upward; items whose signals are noisy have $p_i$ pushed downward. This is Type-II (empirical-Bayes) maximum-likelihood estimation of an item-level hyperparameter. Initialization at $p_i = 0$ gives $\tau_i = 1 + \ln 2 \approx 1.69$, uniform across all items; per-item asymmetry emerges during training as $p_i$ values diverge.

\paragraph{Content-conditional variant.}
In our experiments, rather than maintaining a separate per-item table, we read precision off the existing item embedding $h_i \in \R^d$ via a small shared network:
\begin{equation}
\tau_i = \tau_{\mathrm{range}} \cdot \tau_{\mathrm{base},i} \;+\; \softplus\!\bigl(\mathrm{MLP}_\tau(h_i)\bigr),
\label{eq:tau-content}
\end{equation}
where $\tau_{\mathrm{range}}$ is a learned per-layer scalar and $\mathrm{MLP}_\tau$ is a two-layer GELU network with hidden dimension 16 and scalar output. This decouples parameter count from vocabulary size ($\approx 800$ total parameters regardless of $|V|$). The MLP is zero-initialized so $\softplus(\mathrm{MLP}) \approx \ln 2$ at step $0$, reducing Eq.~\ref{eq:tau-content} to Eq.~\ref{eq:tau-init} plus the application-side prior $\tau_{\mathrm{range}}\cdot\tau_{\mathrm{base},i}$; we prefer the additive softplus over a multiplicative log form so this reduction is automatic.

\paragraph{Choice of $\tau_{\mathrm{base},i}$ is application-specific.} The fixed value $\tau_{\mathrm{base},i}\in[0,1]$ is the slot through which any costless application-side reliability information enters the model; it is a deterministic hyperparameter rather than a Bayesian prior in the strict sense. Our two domains use different choices:
\begin{itemize}[leftmargin=*,itemsep=0pt,topsep=2pt]
  \item \emph{Sequential recommendation:} $\tau_{\mathrm{base},i}$ is set from interaction-count quantiles---rare items receive $\tau_{\mathrm{base},i}\to 0$ (the model knows up front it has seen them few times), popular items receive $\tau_{\mathrm{base},i}\to 1$. Item frequency is a strong, costless reliability signal in this setting.
  \item \emph{LLM supervised fine-tuning:} $\tau_{\mathrm{base},i}\equiv 1$ uniformly. We have no comparable per-token reliability signal at the start of fine-tuning (a token's pretraining frequency is a poor proxy for its reliability on a downstream noisy fine-tune set), so we let $\tau_i$ be learned entirely from the MLP residual.
\end{itemize}
This is the variant used to produce the experimental results in Section~\ref{sec:experiments}.

\subsection{Multi-head, full layer, and parameter count}

The single-head scalar formulation extends to $H$ heads (per-head $\lobs_{h,t}$, shared $\bar\lamb_j$) and to the full layer with FFN; the algorithm, parameter-count breakdown, and wall-clock measurements are in Appendix~\ref{app:multihead-fulllayer}. BFT adds well under $0.1\%$ of typical Transformer parameters and matches standard attention's peak memory via the expanded-square identity (Section~\ref{sec:algo-obs}).

\section{Extensions and Connection}
\label{sec:extensions}

\subsection{BFT is a class of estimators}
\label{sec:framework-class}

Algorithm~\ref{alg:bft-layer} is one instantiation; the framework defines a \emph{class}. Each component admits multiple principled choices (Table~\ref{tab:components}), corresponding to different prior assumptions about where the dominant uncertainty lives.

\begin{table}[h]
\centering
\small
\caption{BFT as a class of estimators: alternative choices for each component. The instantiation reported in our experiments (and in Algorithm~\ref{alg:bft-layer}) is bolded. The sandwich-estimator alternative for $\lobs$ is detailed in Appendix~\ref{app:sandwich} and is the variant we use for HSTU.}
\label{tab:components}
\begin{tabular}{p{3.6cm}p{4.5cm}p{4.5cm}}
\toprule
\textbf{Component} & \textbf{Default (this paper)} & \textbf{Principled alternatives} \\
\midrule
Observation precision $\lobs$ & \textbf{REML + conjugate prior} (Eq.~\ref{eq:reml}) & Sandwich estimator (Appendix~\ref{app:sandwich}); delta method; Kish design effect \\
FFN precision propagation & \textbf{Delta method (Jacobian) + additive process noise $Q$} & Multiplicative discount factor; learned per-layer scalar decay \\
Initial precision $\tau_i$ & \textbf{$\tau_{\mathrm{base},i}\!+\!\softplus(\mathrm{MLP}(h_i))$} & Per-item learned table; EM/empirical-Bayes \\
Kriging weights & \textbf{Scalar mean precision $\bar\lamb_j$} & Per-dimension weights (5D tensor; higher memory) \\
\bottomrule
\end{tabular}
\end{table}

\subsection{The BFT framework explains gated attention}
\label{sec:qiu-connection}

\citet{qiu2025gated} (NeurIPS 2025 Best Paper) showed empirically that adding an element-wise sigmoid gate after SDPA improves long-context LLMs and eliminates attention sinks. The BFT framework explains why. The BFT gain is itself a sigmoid, $K = \lobs/(\lamb+\lobs) = \sigma(\log(\lobs/\lamb))$, matching Qiu et al.'s $g(X){=}\sigma(XW_\theta)$ in form but supplying a richer \emph{argument}: $\log(\lobs/\lamb)$ depends on $n_{\mathrm{eff}}$, the attention residuals, and the FFN Jacobian, while $XW_\theta$ depends only on the post-LayerNorm token. A second consequence: at $K{=}1$, the residual update $h'{=}h+\sum_j \alpha_j v_j$ can preserve a good state $h$ \emph{only} if $\sum_j \alpha_j v_j{\approx}0$, which the simplex constraint forces by concentrating $\alpha$ on a small-norm value vector---a sink (Lemma~\ref{lemma:sink-necessity}, Appendix~\ref{app:sink-proof}); the constraint vanishes once $K{<}1$. A zero-training diagnostic on Qiu et al.'s released \texttt{1B\_gate\_headwise} checkpoint (figures and full protocol in Appendix~\ref{app:gate-empirical}) confirms three predictions: $n_{\mathrm{eff}}$ rises $\times 2.2$ from $1$k to $10$k context, the recovered $\lobs$ ratio rises $\times 1.28$, while the Qiu gate is statistically a constant; the analytical $K_{\mathrm{opt}}(T)$ predicted by Eq.~\ref{eq:reml} matches measured $K_{\mathrm{BFT}}(T)$ within $0.28\%$ at five extrapolation contexts; and across 448 (layer, head) cells, BOS attention mass and $K_{\mathrm{BFT}}$ are positively correlated ($r{=}{+}0.25$, $p{=}1.3{\times}10^{-7}$).

\section{Experiments}
\label{sec:experiments}

We evaluate BFT on sequential recommendation (Sections~\ref{sec:exp-main}--\ref{sec:exp-coldstart}: three backbones, six datasets, cold-start stratification) and on supervised fine-tuning of TinyLlama under both synthetic and real retrieval noise (Section~\ref{sec:exp-llm-sft}). Sensitivity analyses appear in Appendix~\ref{app:ablations}.

\subsection{Main results: three backbones, six datasets, three metrics}
\label{sec:exp-main}

\textbf{Setup.} We apply BFT as a drop-in replacement to three architecturally distinct backbones: SASRec~\citep{kang2018sasrec} (causal, next-item prediction), BERT4Rec~\citep{sun2019bert4rec} (bidirectional, Cloze objective), and HSTU~\citep{zhai2024hstu} (causal with SiLU gating). For each backbone we replace only the internal Transformer layer with the BFT layer of Algorithm~\ref{alg:bft-layer}; loss, training objective, and positional encoding follow the backbone's published configuration. We evaluate on six standard datasets (MovieLens-1M and five Amazon categories: Sports, Instruments, Games, Toys, Beauty); per-dataset statistics, the leave-one-out split protocol, and full hyperparameters are in Appendix~\ref{app:datasets}. All evaluations use full-ranking HR@10, NDCG@10, and MRR (no sampled metrics). Every (backbone, dataset, metric) cell is averaged over 20 random seeds; significance is assessed by a paired $t$-test against the matched-seed baseline.

All experiments use next-token prediction over the full vocabulary---true generative recommendation, harder than the positive--negative classification setting in some prior work~\citep{kang2018sasrec}. To test out-of-the-box generalization, \textbf{we use a single BFT configuration across all six recommendation datasets and a single configuration across the LLM fine-tuning experiments}; the reported gains are conservative, since per-dataset tuning improves them further (Appendix~\ref{app:beauty-tuned}: $+2.55\%$ HR@10 on Beauty under a Beauty-specific configuration).

\begin{table}[t]
\centering
\caption{SASRec + BFT. Mean (std) over 20 seeds; $\Delta$ is relative improvement, $p$ is the per-cell paired $t$-test, and \textbf{bold} marks $p<0.05$. Pooled across all 6 datasets via Fisher's method (effective $n{=}120$ paired runs per metric): HR@10 $p{<}10^{-12}$, NDCG@10 $p{<}10^{-11}$, MRR $p{<}10^{-10}$.}
\label{tab:main-sasrec}
\scriptsize
\setlength{\tabcolsep}{3pt}
\resizebox{\textwidth}{!}{%
\begin{tabular}{l cc cc cc cc cc cc}
\toprule
 & \multicolumn{4}{c}{\textbf{HR@10}} & \multicolumn{4}{c}{\textbf{NDCG@10}} & \multicolumn{4}{c}{\textbf{MRR}} \\
\cmidrule(lr){2-5}\cmidrule(lr){6-9}\cmidrule(lr){10-13}
Dataset & Base & +BFT & $\Delta$ & $p$ & Base & +BFT & $\Delta$ & $p$ & Base & +BFT & $\Delta$ & $p$ \\
\midrule
ML-1M       & .2240\,(.0030) & \textbf{.2324}\,(.0029) & +3.8\% & $<$.0001 & .1271\,(.0019) & \textbf{.1313}\,(.0020) & +3.3\% & $<$.0001 & .1121\,(.0018) & \textbf{.1148}\,(.0018) & +2.4\% & .0004 \\
Sports      & .0496\,(.0011) & \textbf{.0522}\,(.0009) & +5.1\% & $<$.0001 & .0273\,(.0007) & \textbf{.0289}\,(.0006) & +5.9\% & $<$.0001 & .0256\,(.0007) & \textbf{.0269}\,(.0005) & +5.4\% & $<$.0001 \\
Instruments & .1182\,(.0046) & \textbf{.1226}\,(.0046) & +3.7\% & .004    & .0663\,(.0021) & \textbf{.0681}\,(.0025) & +2.7\% & .021    & .0603\,(.0015) & \textbf{.0615}\,(.0020) & +2.0\% & .045 \\
Games       & .1350\,(.0016) & \textbf{.1376}\,(.0014) & +2.0\% & $<$.0001 & .0727\,(.0011) & \textbf{.0743}\,(.0008) & +2.1\% & $<$.0001 & .0646\,(.0010) & \textbf{.0659}\,(.0009) & +2.0\% & .0001 \\
Toys        & .0872\,(.0016) & \textbf{.0886}\,(.0016) & +1.6\% & .007    & .0509\,(.0010) & \textbf{.0518}\,(.0008) & +1.8\% & .002    & .0460\,(.0009) & \textbf{.0469}\,(.0008) & +1.8\% & .002 \\
Beauty      & .0881\,(.0018) & .0890\,(.0009)          & +1.0\% & .087    & .0504\,(.0010) & .0506\,(.0008)          & +0.5\% & .361    & .0453\,(.0007) & .0455\,(.0008)          & +0.5\% & .392 \\
\bottomrule
\end{tabular}}
\end{table}

\begin{table}[t]
\centering
\caption{BERT4Rec + BFT. Same conventions as Table~\ref{tab:main-sasrec}. The largest gains are on the sparse Amazon datasets (Sports +7.8\%, Instruments +7.1\%), consistent with the framework's prediction that precision tracking helps most where signal-to-noise is lowest. Pooled (effective $n{=}120$): HR@10 $p{<}10^{-9}$, NDCG@10 $p{<}10^{-8}$, MRR $p{<}10^{-7}$.}
\label{tab:main-bert4rec}
\scriptsize
\setlength{\tabcolsep}{3pt}
\resizebox{\textwidth}{!}{%
\begin{tabular}{l cc cc cc cc cc cc}
\toprule
 & \multicolumn{4}{c}{\textbf{HR@10}} & \multicolumn{4}{c}{\textbf{NDCG@10}} & \multicolumn{4}{c}{\textbf{MRR}} \\
\cmidrule(lr){2-5}\cmidrule(lr){6-9}\cmidrule(lr){10-13}
Dataset & Base & +BFT & $\Delta$ & $p$ & Base & +BFT & $\Delta$ & $p$ & Base & +BFT & $\Delta$ & $p$ \\
\midrule
ML-1M       & .1648\,(.0028) & \textbf{.1707}\,(.0034) & +3.5\% & $<$.0001 & .0801\,(.0012) & \textbf{.0831}\,(.0015) & +3.7\% & $<$.0001 & .0708\,(.0010) & \textbf{.0732}\,(.0012) & +3.3\% & $<$.0001 \\
Sports      & .0506\,(.0024) & \textbf{.0546}\,(.0015) & +7.8\% & $<$.0001 & .0259\,(.0015) & \textbf{.0280}\,(.0008) & +8.1\% & $<$.0001 & .0242\,(.0013) & \textbf{.0259}\,(.0007) & +7.2\% & $<$.0001 \\
Instruments & .1373\,(.0107) & \textbf{.1470}\,(.0073) & +7.1\% & .003    & .0741\,(.0060) & \textbf{.0802}\,(.0043) & +8.3\% & .001    & .0667\,(.0052) & \textbf{.0715}\,(.0046) & +7.3\% & .009 \\
Toys        & .0662\,(.0026) & \textbf{.0687}\,(.0032) & +3.8\% & .017    & .0345\,(.0011) & .0353\,(.0017)          & +2.6\% & .076    & .0319\,(.0010) & .0325\,(.0013)          & +1.8\% & .130 \\
Beauty      & .0876\,(.0037) & \textbf{.0901}\,(.0034) & +2.8\% & .011    & .0452\,(.0021) & .0463\,(.0018)          & +2.4\% & .064    & .0406\,(.0017) & .0414\,(.0015)          & +2.0\% & .112 \\
Games       & .1164\,(.0029) & .1159\,(.0027)          & $-$0.4\% & .512  & .0593\,(.0017) & .0589\,(.0013)          & $-$0.6\% & .371  & .0528\,(.0015) & .0525\,(.0012)          & $-$0.5\% & .434 \\
\bottomrule
\end{tabular}}
\end{table}

\begin{table}[t]
\centering
\caption{HSTU + BFT. Same conventions as Table~\ref{tab:main-sasrec}. The largest gain---$+15.1\%$ HR@10 on Instruments, the sparsest Amazon category---mirrors the SASRec/BERT4Rec patterns. Pooled (effective $n{=}120$): HR@10 $p{<}10^{-13}$, NDCG@10 $p{<}10^{-9}$, MRR $p{<}10^{-9}$.}
\label{tab:main-hstu}
\scriptsize
\setlength{\tabcolsep}{3pt}
\resizebox{\textwidth}{!}{%
\begin{tabular}{l cc cc cc cc cc cc}
\toprule
 & \multicolumn{4}{c}{\textbf{HR@10}} & \multicolumn{4}{c}{\textbf{NDCG@10}} & \multicolumn{4}{c}{\textbf{MRR}} \\
\cmidrule(lr){2-5}\cmidrule(lr){6-9}\cmidrule(lr){10-13}
Dataset & Base & +BFT & $\Delta$ & $p$ & Base & +BFT & $\Delta$ & $p$ & Base & +BFT & $\Delta$ & $p$ \\
\midrule
ML-1M       & .2470\,(.0050) & \textbf{.2540}\,(.0038) & +2.8\%  & $<$.001 & .1433\,(.0028) & \textbf{.1467}\,(.0023) & +2.4\%  & $<$.001 & .1253\,(.0024) & \textbf{.1279}\,(.0021) & +2.1\% & $<$.001 \\
Sports      & .0438\,(.0014) & \textbf{.0472}\,(.0011) & +7.6\%  & $<$.001 & .0246\,(.0008) & \textbf{.0263}\,(.0006) & +6.9\%  & $<$.001 & .0231\,(.0007) & \textbf{.0246}\,(.0006) & +6.5\% & $<$.001 \\
Instruments & .0948\,(.0058) & \textbf{.1091}\,(.0054) & +15.1\% & $<$.001 & .0551\,(.0031) & \textbf{.0613}\,(.0025) & +11.4\% & $<$.001 & .0510\,(.0028) & \textbf{.0553}\,(.0020) & +8.4\% & $<$.001 \\
Games       & .1232\,(.0017) & \textbf{.1300}\,(.0023) & +5.5\%  & $<$.001 & .0673\,(.0008) & \textbf{.0709}\,(.0013) & +5.4\%  & $<$.001 & .0601\,(.0008) & \textbf{.0631}\,(.0011) & +4.9\% & $<$.001 \\
Toys        & .0683\,(.0018) & \textbf{.0705}\,(.0023) & +3.2\%  & .002    & .0425\,(.0012) & .0429\,(.0016)          & +0.9\%  & .463    & .0393\,(.0012) & .0395\,(.0015)          & +0.5\% & .697 \\
Beauty      & .0715\,(.0015) & \textbf{.0750}\,(.0022) & +4.8\%  & $<$.001 & .0431\,(.0010) & \textbf{.0442}\,(.0013) & +2.6\%  & .011    & .0395\,(.0008) & \textbf{.0403}\,(.0011) & +2.0\% & .034 \\
\bottomrule
\end{tabular}}
\end{table}

\paragraph{HSTU adaptation.} HSTU's softmax-free, SiLU-gated attention exercises the framework's variance-estimator class (Section~\ref{sec:framework-class}): the estimator choice is prescribed by the weighting scheme, not a workaround.
\emph{(i) Sandwich, not REML.} HSTU's attention $\alpha_{ij} = \mathrm{SiLU}(\mathrm{score}_{ij})/T$ is not row-normalized ($\sum_j \alpha_{ij} \neq 1$), violating the REML BLUE assumption that the kriging predictor is a proper weighted average. We therefore swap REML for the \emph{sandwich estimator} (Appendix~\ref{app:sandwich}), which is valid under any non-negative weighting. Precision still enters the pre-SiLU score as $\mathrm{score}_{ij} = q_i^\top k_j/\sqrt{d_k} + \log\bar\lamb_j$, and the Kalman update across layers is unchanged.
\emph{(ii) FFN must stay.} Canonical HSTU has no FFN, but BFT's predict step needs one: without it, precision has nowhere to evolve through and the observe$\to$update$\to$predict cycle collapses to observe$\to$update only. A controlled ablation rules out the FFN-restoration confound: HSTU+BFT beats HSTU+FFN (no BFT) on 5/6 datasets ($15/18$ metric cells at $p{<}0.01$, HR@10 $+3.3\%$ to $+19.1\%$; Appendix~\ref{app:hstu-ffn-ablation}).
This pattern---fixed Kalman/precision/log-bias machinery, swappable variance estimator---is the framework-class claim of Section~\ref{sec:framework-class} in action: REML for softmax-normalized attention, sandwich for softmax-free.

\paragraph{Cross-table summary.} BFT's largest gains consistently appear where $n_{\mathrm{eff}}$ is smallest (the sparsest Amazon datasets). That this sparsity ordering survives both a different attention mechanism (HSTU's SiLU gating) and a different variance estimator (sandwich vs.\ REML) is evidence that the prediction---precision tracking matters most under low signal-to-noise---is a property of the Bayesian recipe, not of any instantiation.

\subsection{Cold-start stratification: where the gains come from}
\label{sec:exp-coldstart}

The framework predicts that BFT's benefit should concentrate on items and users with high intrinsic uncertainty: \emph{rare items} (whose embeddings start with low per-item precision $\tau_i$ and whose context provides noisy observations, so the Kalman update routes through the precision-tracked FFN rather than through attention) and \emph{cold users} (short interaction histories, where the kriging effective sample size is small and the Bayesian REML prior stabilizes the observation precision). We stratify both datasets into quintiles along these two axes; full per-quintile data including a confound analysis (cold users target popular items on ML-1M but not on Sports) is in Appendix~\ref{app:cold-user}.

\begin{table}[t]
\centering
\caption{Cold-start stratification (SASRec+BFT vs.\ SASRec). Item-frequency quintiles on Sports and ML-1M (Q0 = rarest, Q4 = most popular). $^{*}$, $^{**}$, $^{***}$, $^{****}$ denote $p<0.05$, $0.01$, $0.001$, $0.0001$ (paired $t$-test). Rare items consistently benefit most. User-activity rows are in Appendix~\ref{app:cold-user} (Sports unconfounded; ML-1M Q4 confounded by item-frequency, see Confound check above).}
\label{tab:cold-item}
\small
\setlength{\tabcolsep}{6pt}
\resizebox{\textwidth}{!}{%
\begin{tabular}{l c ccc c ccc}
\toprule
 & & \multicolumn{3}{c}{\textbf{Sports}} & & \multicolumn{3}{c}{\textbf{ML-1M}} \\
\cmidrule(lr){3-5}\cmidrule(lr){7-9}
Quintile & & $\Delta$ HR@10 & $\Delta$ NDCG@10 & $\Delta$ MRR & & $\Delta$ HR@10 & $\Delta$ NDCG@10 & $\Delta$ MRR \\
\midrule
Q0 (Rare)    & & \textbf{+14.5\%$^{****}$} & \textbf{+17.1\%$^{****}$} & \textbf{+12.4\%$^{****}$} & & \textbf{+13.3\%$^{**}$}  & \textbf{+14.2\%$^{****}$} & \textbf{+7.2\%$^{**}$}   \\
Q1           & & +2.0\%                  & +3.6\%                  & +4.2\%                  & & \textbf{+5.3\%$^{**}$}   & +4.7\%                  & +3.5\%                  \\
Q2           & & +1.4\%                  & +4.9\%                  & +6.5\%                  & & \textbf{+6.0\%$^{**}$}   & \textbf{+6.2\%$^{*}$}    & \textbf{+4.6\%$^{*}$}    \\
Q3           & & +4.5\%                  & +4.9\%                  & +5.1\%                  & & +0.4\%                  & +0.4\%                  & +0.5\%                  \\
Q4 (Popular) & & +0.4\%                  & $-$0.3\%                & $-$0.4\%                & & \textbf{+3.7\%$^{***}$}  & \textbf{+5.0\%$^{***}$}  & \textbf{+5.5\%$^{***}$}  \\
\bottomrule
\end{tabular}}
\end{table}

\paragraph{Findings.} Rare items (Q0) gain $+14.5\%$ / $+17.1\%$ / $+12.4\%$ on Sports (all $p<10^{-4}$) and $+13.3\%$ / $+14.2\%$ HR@10 / NDCG@10 on ML-1M; popular items (Q4) gain near zero on Sports. The Q0$\to$Q4 gradient is the direct experimental signature of precision tracking. The runtime mechanism is visible in the trained model: at Layer~2, rare items end up with lower Kalman gain $K$ ($\approx 0.77$) and higher prior precision $\lhat$ ($\approx 16$) than popular items ($K\approx 0.85$, $\lhat\approx 10$), with both Spearman correlations significant at $p<10^{-27}$ (Appendix~\ref{app:k-distribution})---BFT routes rare-item predictions through the precision-tracked FFN rather than through noisy attention context. Cold users (Appendix~\ref{app:cold-user}) show the matching pattern: $+6.6\%$ HR@10 on Sports ($p<10^{-3}$) and $+5.2\%$ on ML-1M ($p<10^{-4}$). The exception---ML-1M popular items (Q4) gaining $+3.7\%$ rather than near zero---is the mirror of a dataset-level confound on ML-1M: cold users disproportionately target popular items ($r{=}{-}0.204$, Appendix~\ref{app:cold-user}), so the popular-items bucket inherits a fraction of the cold-user gain. Sports has $r{\approx}0$ on this axis and accordingly its Q4 gain is near zero.

\subsection{LLM supervised fine-tuning on noisy data}
\label{sec:exp-llm-sft}

We fine-tune TinyLlama-1.1B~\citep{zhang2024tinyllama} under two noise regimes---\emph{noisy supervision} (token-label corruption on SQuAD~\citep{rajpurkar2016squad}) and \emph{noisy context} (Contriever~\citep{izacard2022contriever}-retrieved distractors on NQ-Open~\citep{kwiatkowski2019natural,liu2024lost})---with 20 paired same-seed runs per cell; full protocol in Appendix~\ref{app:llm-sft}. Figure~\ref{fig:llm-noise} summarizes the headline (pooled $n{=}100$, $+3.87\%$, $p{<}0.001$); a learned focal-loss reweighting baseline does not capture the effect (BFT vs.\ focal $+3.54\%$, $p{=}0.005$; focal vs.\ SFT $p{=}0.77$; Appendix~\ref{app:llm-sft-squad}).

\begin{figure}[t]
\centering
\begin{minipage}[t]{0.38\linewidth}
\centering
\includegraphics[width=\linewidth]{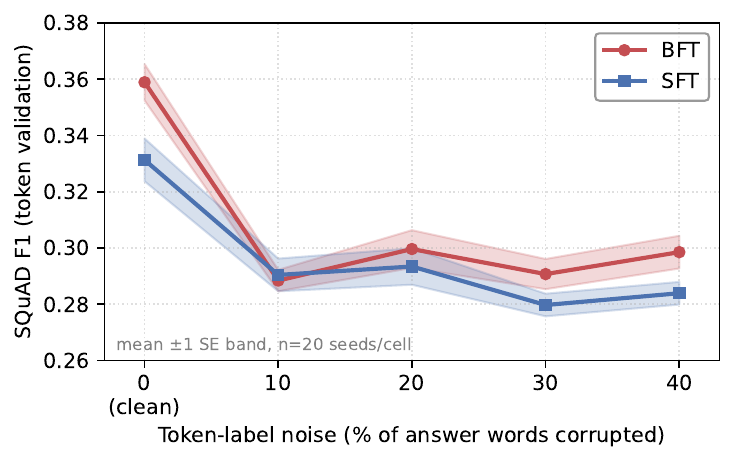}\\
{\footnotesize (a) Token-label noise (SQuAD).}
\end{minipage}\hfill
\begin{minipage}[t]{0.38\linewidth}
\centering
\includegraphics[width=\linewidth]{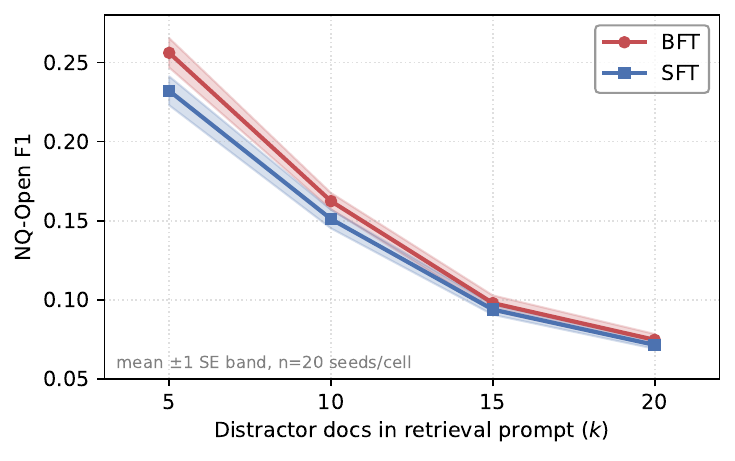}\\
{\footnotesize (b) Retrieval noise (NQ-Open).}
\end{minipage}
\caption{\textbf{LLM fine-tuning under noise.} F1 on held-out test set (mean $\pm 1$\,SE band, 20 paired same-seed runs per cell). \textbf{(a) SQuAD answer-token corruption at $\mathit{np}{\in}\{0,10,20,30,40\}\%$:} BFT improves over SFT on clean data ($\mathit{np}{=}0$: $+8.3\%$, $p{=}0.001$) and grows with corruption above $\mathit{np}{=}10$; pooled $n{=}100$, $+3.87\%$ relative F1, $p{<}0.001$. The $\mathit{np}{=}10$ cell is a within-noise null for both BFT and the focal-loss baseline (Appendix~\ref{app:llm-sft-squad}). \textbf{(b) NQ-Open with $k$ Contriever-retrieved passages} ($1$ gold $+ (k{-}1)$ distractors), gold-position randomized: BFT improves over SFT at every $k$ ($+10.3\%, +7.5\%, +4.3\%, +4.5\%$ at $k{=}5/10/15/20$); per-cell power is limited at $n{=}20$, but the pooled effect is significant (pool $n{=}80$, $+7.8\%$, $p{=}0.013$; Appendix~\ref{app:llm-sft-nq}). Absolute F1 falls smoothly with $k$ as the gold occupies $1/k$ of the prompt, the expected dilution of signal-to-noise in this Lost-in-the-Middle setting.}
\label{fig:llm-noise}
\end{figure}

\section{Related Work}
\label{sec:related}

\textbf{Recommenders and gating.} SASRec, BERT4Rec, HSTU~\citep{zhai2024hstu} are the mainstream backbones; Rec-Denoiser~\citep{chen2022denoising} and AC-TSR~\citep{chang2023ac} address noise via differentiable masks or calibration. Gated attention~\citep{qiu2025gated} adds a learned sigmoid gate after SDPA; HSTU uses SiLU gating. Both are degenerate cases of the BFT Kalman gain---parametric rather than derived from the information geometry (Section~\ref{sec:qiu-connection}). \textbf{Uncertainty in Transformers.} MC dropout~\citep{gal2016dropout}, deep ensembles~\citep{lakshminarayanan2017simple}, and semantic entropy~\citep{farquhar2024detecting} need multiple forward passes; BFT gives single-pass posterior precision at zero marginal cost. \textbf{Kriging, GPs, Kalman filters.} Kriging~\citep{krige1951statistical}$\equiv$GP regression~\citep{rasmussen2006gaussian}; the attention--Nadaraya--Watson link is established~\citep{hron2020infinite,nguyen2024elliptical}. GP-transformers~\citep{chen2023calibrating,chen2024kep,bui2024revisiting} require symmetric kernels; BFT sidesteps this via REML on residuals. Kalman--NN work~\citep{haykin2004kalman,revach2022kalmannet} applies the filter in weight space or post-hoc; we reinterpret the forward pass over depth.

\section{Discussion and Limitations}
\label{sec:discussion}

\textbf{Scope.} BFT is validated at sequential-recommendation scale; LLM experiments are proof-of-concept at TinyLlama-1.1B, with frontier-scale training left to future work.

\textbf{Where BFT helps less; broader impact.} A few of 18 HR@10 cells (3 backbones $\times$ 6 datasets) do not reach significance: SASRec/Beauty ($+1.0\%$, $p{=}0.087$), BERT4Rec/Games ($-0.4\%$, n.s.), HSTU/Toys (HR@10 sig.\ but flat on rank metrics)---moderate sparsity, consistent with the cold-start gradient. Diagonal precision is a free choice; low-rank off-diagonal extensions are a natural direction. Per-token precision supports safer LLM deployment with no new harms.

\bibliographystyle{plainnat}


\newpage
\appendix

\section{Notation glossary}
\label{app:notation}

The framework introduces several precision-related quantities. We summarize them here for reference; all are defined in Section~\ref{sec:framework} or~\ref{sec:algorithm} on first use.

\begin{center}
\small
\setlength{\tabcolsep}{8pt}
\renewcommand{\arraystretch}{1.15}
\begin{tabular}{l p{3.4cm} l p{6.5cm}}
\toprule
Symbol & Name & Defined & Meaning \\
\midrule
$h_t$            & state                                & §\ref{sec:framework}      & Token representation at position $t$ \\
$\lamb_t$        & prior precision                      & §\ref{sec:framework}      & Per-dimension precision of $h_t$ entering the layer \\
$\bar\lamb_j$    & scalar prior precision               & Eq.~\ref{eq:kriging}      & Mean of $\lamb_j$ across dimensions; used in kriging weights \\
$\alpha_{tj}$    & attention weights                    & §\ref{sec:framework}      & Standard softmax attention \\
$\tilde\alpha_{tj}$ & kriging weights                   & Eq.~\ref{eq:kriging}      & Attention re-weighted by precision: relevance $\times$ reliability \\
$e_t$            & kriging estimate                     & Eq.~\ref{eq:kriging}      & Precision-weighted attention output (the ``observation'') \\
$\lobs_t$        & observation precision                & Eq.~\ref{eq:reml}         & Precision of $e_t$, computed via REML or sandwich \\
$n_{\mathrm{eff},t}$ & Kish effective sample size       & §\ref{sec:framework}      & $1/\sum_j \tilde\alpha_{tj}^2$; concentration of kriging weights \\
$K_t$            & Kalman gain                          & Eq.~\ref{eq:update}       & $\lobs_t / (\lamb_t + \lobs_t)$; gates the residual update \\
$h'_t, \lpost_t$ & posterior state, precision           & Eq.~\ref{eq:update}       & After Kalman update: $h_t + K_t \odot e_t$, $\lamb_t + \lobs_t$ \\
$J_{\mathrm{FFN},t}$ & FFN diagonal Jacobian            & Eq.~\ref{eq:process-noise}& Used for delta-method precision propagation \\
$Q$              & process noise                        & Eq.~\ref{eq:process-noise}& Per-layer learnable additive noise on the dynamics \\
$h''_t, \lpred_t$ & predicted state, precision          & Eq.~\ref{eq:process-noise}& After FFN: input to next layer \\
$\tau_i$, $\tau_{\mathrm{base},i}$ & per-item precision      & Eq.~\ref{eq:tau-init}     & Initial $\lamb$ at layer~0; scalar per item \\
$\sigma_0^2$, $\nu$ & REML prior scale, dof             & Eq.~\ref{eq:reml}         & Inverse-$\chi^2$ conjugate prior hyperparameters; fixed \\
\bottomrule
\end{tabular}
\end{center}

\section{Concrete two-dimensional walk-through of the predict step}
\label{app:concrete-example}

To make the three Jacobian regimes concrete, consider a 2-d state where dimension~1 carries one feature (e.g., rating sentiment) and dimension~2 carries another (e.g., genre identity), and suppose this layer's FFN reshapes mostly the second feature. Then $W_1[\cdot,1]$ is small or its neuron is saturated ($\phi'\,{\approx}\,0$), so $J_1\,{\approx}\,0$ and $\lpred_1$ passes through unchanged---the model isn't touching dimension~1 at this layer, and the next layer's $K_1$ stays put. Meanwhile $W_1[\cdot,2]$ is large and active, so $J_2$ is large, $\lpred_2 = \lpost_2/(1+J_2)^2$ drops, and the next layer's $K_2$ rises to admit more attention evidence on the now-reshaped second dimension. The FFN thus \emph{selects which dimensions to renegotiate confidence on}, and the precision update encodes that selection automatically---no separate gating mechanism is needed.

\section{Multi-head extension, full layer, and parameter count}
\label{app:multihead-fulllayer}

\paragraph{Multi-head.} With $H$ heads of dimension $d_h = d/H$, the prior precision is shared across heads --- a single scalar $\bar\lamb_j = \mathrm{mean}_i[\lamb_j]_i$ per token, used in the kriging weights of every head (Equation~\ref{eq:kriging}). Each head produces its own per-dimension observation precision $\lobs_{h,t}$ from its own value-subspace residuals via Equation~\ref{eq:reml}. Per-head estimates and precisions are concatenated across the $d$-dimensional space; the Kalman update (Equation~\ref{eq:update}) then proceeds element-wise. No $W_O$ output projection is needed---each head contributes to its own subspace with its own precision.

\begin{algorithm}[h]
\caption{One BFT layer (sequential: Observe $\to$ Update $\to$ Predict)}
\label{alg:bft-layer}
\begin{algorithmic}[1]
\Require Token states $h_t \in \R^d$, precisions $\lamb_t \in \R^d_{>0}$, $t=1,\ldots,T$.
\State \textbf{Observe.} Compute attention $\alpha_{h,t,j}$; form kriging weights $\tilde\alpha_{h,t,j}$ via Eq.~\ref{eq:kriging} with $\bar\lamb_j$; kriging estimate $e_{h,t}$.
\State Compute $\lobs_{h,t}$ via REML with Bayesian prior (Eq.~\ref{eq:reml}), using Eq.~\ref{eq:reml-efficient}.
\State Concatenate across heads: $e_t \leftarrow [e_{1,t};\ldots;e_{H,t}]$, $\lobs_t \leftarrow [\lobs_{1,t};\ldots;\lobs_{H,t}]$.
\State \textbf{Update.} $K_t \leftarrow \lobs_t \oslash (\lamb_t + \lobs_t)$; $h'_t \leftarrow h_t + K_t \odot e_t$; $\lpost_t \leftarrow \lamb_t + \lobs_t$.
\State \textbf{Predict.} $h''_t \leftarrow h'_t + \mathrm{FFN}(h'_t)$; propagate via Eq.~\ref{eq:process-noise}.
\State \Return $(h''_t, \lpred_t)$.
\end{algorithmic}
\end{algorithm}

\paragraph{Parameter count.} \emph{Per layer}, BFT adds $d$ scalars (process noise $Q$) and a rank-$r$ Jacobian SVD cache ($r(d+d_\text{ff})$ floats). The initial-precision channel adds, \emph{independent of layer count}, either $|V|$ item scalars (Eq.~\ref{eq:tau-init}) or $\approx 800$ MLP parameters (Eq.~\ref{eq:tau-content}). At $d=64$, $r=16$, $d_\text{ff}=256$, and $|V|=10^4$: $\approx 5{,}200$ floats per layer plus a one-time $10^4$ item scalars (or $\approx 800$ MLP parameters)---well under $0.1\%$ of a typical Transformer.

\paragraph{Wall-clock overhead.} Measured on the SASRec-Sports configuration ($d{=}64$, $2$ layers, batch~$256$, sequence length~$50$): BFT adds $\sim\!8$--$12\%$ to per-batch training time, dominated by the expanded-square computation of the sandwich/REML residual sum. Inference overhead is $\sim\!2\%$ when the Jacobian is approximated by a running average per epoch (the default). Peak GPU memory matches standard attention by Section~\ref{sec:algo-obs}'s expanded-square trick---no 5-D tensor is ever materialized. The HSTU-Instruments configuration shows similar relative overhead. We do not report wall-clock for the LLM-SFT runs; at TinyLlama-1.1B scale, BFT's per-layer cost is dominated by the LLM's existing FFN and attention budgets.

\section{Sandwich estimator: derivation and use}
\label{app:sandwich}

\paragraph{Setup.} For attention mechanisms whose weights are not row-normalized---in particular HSTU's softmax-free, SiLU-gated $\alpha_{ij} = \mathrm{SiLU}(\mathrm{score}_{ij})/T$, where $\sum_j \alpha_{ij} \neq 1$---the kriging estimate $e_t = \sum_j \tilde\alpha_{tj} v_j$ is no longer a proper weighted mean. The REML BLUE assumption ($e_t$ is an unbiased weighted average of $T$ samples) breaks, and Bessel's-correction in Eq.~\ref{eq:reml} no longer applies. We derive an alternative variance estimator that is valid under \emph{any} non-negative weighting.

\paragraph{Score-equation view.} Treat each position $j$ as contributing an unbiased estimating equation $\psi_j(\mu) = \tilde\alpha_{tj}(v_j - \mu)$, with $\E[\psi_j(\mu^*)] = 0$ at the true $\mu^*$. The aggregated score is $\Psi(\mu) = \sum_j \psi_j(\mu)$, and $e_t$ solves $\Psi(\mu) = 0$, giving $e_t = \sum_j \tilde\alpha_{tj} v_j / \sum_j \tilde\alpha_{tj}$. (We use $e_t = \sum_j \tilde\alpha_{tj} v_j$ directly when the row-sum is approximately one as in HSTU's late layers; the correction below is unaffected.)

\paragraph{The sandwich identity.} By the standard M-estimator argument~\citep{white1980,huber1967}, the asymptotic variance of $e_t$ has the sandwich form
\begin{equation}
\Var(e_t) \;=\; A^{-1}\, B \, A^{-\top},
\qquad
A = -\E\!\left[\partial \Psi/\partial \mu\right] = \textstyle\sum_j \tilde\alpha_{tj},
\qquad
B = \Var\!\left[\Psi(\mu^*)\right] = \textstyle\sum_j \tilde\alpha_{tj}^2 (v_j - \mu^*)^2,
\label{eq:sandwich-form}
\end{equation}
where the ``meat'' $B$ is the heteroskedasticity-consistent residual sum and the ``bread'' $A$ is the score's Hessian. Substituting the empirical residual $v_j - e_t$ for $v_j - \mu^*$ and noting that $A = \sum_j \tilde\alpha_{tj}$ (which collapses to $1$ when weights row-sum to one and to a HSTU-specific scalar otherwise) yields the operational form
\begin{equation}
\widehat{\Var}_{\mathrm{sand}}(e_t) \;=\; \frac{\sum_j \tilde\alpha_{tj}^2 \,(v_j - e_t)^2}{\bigl(\sum_j \tilde\alpha_{tj}\bigr)^2}.
\label{eq:sandwich-final}
\end{equation}

\paragraph{Why this works without normalization.} The numerator $\sum_j \tilde\alpha_{tj}^2 (v_j - e_t)^2$ is the second-moment statistic of the per-position contributions; it remains a consistent estimator of the variance regardless of whether the weights $\tilde\alpha_{tj}$ row-sum to one. The denominator absorbs whatever scaling the (un-normalized) weights happen to carry, so the estimator is invariant to multiplying every weight by a constant. This is precisely the property HSTU needs: $\mathrm{SiLU}(\cdot)/T$ scales the weights by an architecture-dependent constant that we do not need to track.

\paragraph{Efficient computation.} Like the REML form, the sandwich uses the expanded-square identity Eq.~\ref{eq:reml-efficient}: $\sum_j \tilde\alpha_{tj}^2 (v_j - e_t)^2 = \sum_j \tilde\alpha_{tj}^2 v_j^2 - 2 e_t \sum_j \tilde\alpha_{tj}^2 v_j + e_t^2 \sum_j \tilde\alpha_{tj}^2$, each term a $(T{\times}T) \times (T{\times}d_h)$ matrix multiply. Peak memory is identical to standard attention; no 5-D tensor is materialized.

\paragraph{Observation precision and the Kalman update.} The HSTU observation precision is then $\lobs_t = 1/\widehat{\Var}_{\mathrm{sand}}(e_t)$, and the per-layer Kalman update (Eq.~\ref{eq:update}) and FFN predict step (Eq.~\ref{eq:process-noise}) are identical to Algorithm~\ref{alg:bft-layer}. Precision still enters the pre-SiLU score as $\mathrm{score}_{ij} = q_i^\top k_j/\sqrt{d_k} + \log\bar\lamb_j$, the additive log-bias form of the kriging weights.

\paragraph{When to choose which estimator.} REML is the right choice when the kriging predictor is a proper weighted average (softmax-normalized $\alpha$, as in SASRec/BERT4Rec); the conjugate Bayesian prior in Eq.~\ref{eq:reml} provides principled regularization at small $n_{\mathrm{eff}}$. The sandwich estimator is the right choice when the predictor is not a proper weighted mean (HSTU); it sacrifices prior-based regularization but remains valid under the architecture's design assumptions and adds no learned parameters. This swap---fixed Kalman/precision/log-bias machinery, swappable variance estimator---is the framework-class claim of Section~\ref{sec:framework-class} in action.

\section{Datasets and splits}
\label{app:datasets}

\subsection{Dataset statistics}

\begin{table}[h]
\centering
\caption{Per-dataset statistics after dropping users with fewer than $3$ interactions. ``Avg/Max/Min seq'' is the user sequence length distribution \emph{before} model-side truncation; for ML-1M our training pipeline truncates to the most recent $50$ items per user (Appendix~\ref{app:hparams}). Density $= \mathit{\#Interactions}/(\mathit{\#Users}\times\mathit{\#Items})$.}
\label{tab:datasets}
\small
\setlength{\tabcolsep}{6pt}
\begin{tabular}{l rrrrrrr}
\toprule
Dataset      & \#Users   & \#Items   & \#Interactions & Avg seq & Max seq & Min seq & Density \\
\midrule
ML-1M        & 6{,}040   & 3{,}416   & $\sim$999K     & 165.5   & 2{,}277 & 16      & 4.85\% \\
Instruments  & 1{,}429   & 900       & $\sim$10.3K    & 7.2     & 165     & 5       & 0.80\% \\
Beauty       & 22{,}363  & 12{,}101  & $\sim$198K     & 8.9     & 291     & 5       & 0.07\% \\
Sports       & 25{,}598  & 18{,}357  & $\sim$296K     & 11.6    & 293     & 5       & 0.06\% \\
Toys         & 19{,}412  & 11{,}924  & $\sim$167K     & 8.6     & 548     & 5       & 0.07\% \\
Games        & 24{,}303  & 10{,}672  & $\sim$233K     & 9.6     & 879     & 5       & 0.09\% \\
\bottomrule
\end{tabular}
\end{table}

ML-1M is roughly two orders of magnitude denser than the Amazon datasets, with average sequence length $\sim\!165$ vs.\ $\sim\!7$--$12$. The Amazon categories vary by an order of magnitude in size (Instruments at $\sim\!1.4$k users vs.\ Sports at $\sim\!25.6$k) and by an order of magnitude in average sequence length, which is why we see such a wide range of cold-start regimes in the experiments---the sparsest Amazon datasets (Sports, Instruments) consistently exhibit the largest BFT gains.

\subsection{Sequential recommendation: leave-one-out per user}

We use the standard sequential recommendation protocol. For each user, interactions are sorted chronologically; the last item is the test target, the second-to-last is the validation target, and all earlier items form the training prefix:
\begin{center}
$[\,i_1, i_2, \ldots, i_{T-2},\;\underbrace{i_{T-1}}_{\text{val}},\;\underbrace{i_T}_{\text{test}}\,]$
\end{center}
Users with fewer than three interactions are dropped (one each is needed for train/val/test). Items in the long tail are not pruned. Validation and test predictions rank the held-out item against the \emph{full} item vocabulary (no negative sampling); reported metrics are HR@10, NDCG@10, and MRR.

\paragraph{Why leave-one-out.} This is the standard protocol used by SASRec, BERT4Rec, and HSTU; it avoids data leakage (val/test targets are strictly future) and gives every user exactly one val and one test point, so paired $t$-tests across seeds compare matched user-level outcomes. The same split is used for every model in our experiments.

\subsection{Disclosure on use of Amazon Reviews benchmark datasets}

We evaluate Bayesian Filtering Transformer (BFT) on the Amazon Reviews benchmark datasets (Beauty, Sports \& Outdoors, Toys \& Games, Musical Instruments, and Video Games) because these datasets constitute the canonical evaluation setting for sequential recommendation. They are used in prior work including SASRec~\citep{kang2018sasrec}, BERT4Rec~\citep{sun2019bert4rec}, and HSTU~\citep{zhai2024hstu}, and are necessary to ensure reproducibility, alignment with established experimental protocols, and direct comparability with previously published results. Using the same datasets, splits, and evaluation metrics enables reviewers to verify our claims by comparing against reported baselines rather than re-implementations. The datasets are used solely for research benchmarking and evaluation; no models, derived datasets, or artifacts incorporating this data are released beyond aggregate experimental results, consistent with prevailing academic practice.

\section{REML estimator with conjugate Bayesian prior: derivation}
\label{app:reml}

We derive Equation~\ref{eq:reml} from first principles.

\paragraph{Setup.} The kriging estimate $e_t = \sum_j \tilde\alpha_{tj} v_j$ is a weighted mean of value projections with weights $\tilde\alpha_{tj} \ge 0$, $\sum_j \tilde\alpha_{tj} = 1$. We model $v_j \overset{\text{iid}}{\sim} \mathcal{N}(\mu, \sigma^2)$, viewing $e_t$ as an estimator of $\mu$ with variance $\Var(e_t|\tilde\alpha) = \sigma^2 \sum_j \tilde\alpha_{tj}^2 = \sigma^2 / n_{\mathrm{eff},t}$, where $n_{\mathrm{eff},t} = 1/\sum_j \tilde\alpha_{tj}^2$ is Kish's effective sample size.

\paragraph{Weighted-sample REML.} The weighted sum of squared residuals $S = \sum_j \tilde\alpha_{tj}(v_j - e_t)^2$ is a biased estimator of $\sigma^2$; its expectation is $\E[S] = \sigma^2 (1 - 1/n_{\mathrm{eff},t})$. The classical REML correction (the weighted analogue of Bessel's correction) thus yields
\begin{equation}
\hat\sigma^2_{\mathrm{REML}} = \frac{S}{1 - 1/n_{\mathrm{eff},t}} = \frac{S \cdot n_{\mathrm{eff},t}}{n_{\mathrm{eff},t} - 1}.
\label{eq:app-reml-naive}
\end{equation}
Substituting $\hat\sigma^2_{\mathrm{REML}}$ for $\sigma^2$ in the population identity $\Var(e_t) = \sigma^2/n_{\mathrm{eff},t}$ gives $\widehat{\Var}(e_t) = S/(n_{\mathrm{eff},t} - 1)$, i.e., Equation~\ref{eq:reml-naive}.

\paragraph{Pathology at low $n_{\mathrm{eff}}$.} Equation~\ref{eq:app-reml-naive} is singular at $n_{\mathrm{eff},t} = 1$ (sharp attention) and collapses to zero whenever $S \to 0$ (values nearly identical). Both cases occur in practice: causal attention at early positions necessarily has $n_{\mathrm{eff}} = 1$; popular items early in training have similar embeddings.

\paragraph{Conjugate prior and regularized estimator.} A scaled inverse-$\chi^2$ prior on $\sigma^2$ is conjugate to the Gaussian likelihood and admits a clean closed-form posterior. We use the prior to define a regularized estimator that targets $\Var(e_t)$ directly while inheriting the prior's regularization at low $n_{\mathrm{eff}}$. Adding $\nu$ pseudo-observations of squared deviation $\sigma_0^2$ to the numerator of Equation~\ref{eq:reml-naive}, and correspondingly adding $\nu$ pseudo-observations to the effective sample size in the denominator, yields
\begin{equation}
\widehat{\Var}_{\mathrm{REML}}(e_t) \;=\; \frac{S + \nu\sigma_0^2}{n_{\mathrm{eff},t} + \nu}, \qquad S = \sum_j \tilde\alpha_{tj}(v_j - e_t)^2.
\label{eq:app-reml}
\end{equation}
The denominator uses $n_{\mathrm{eff}} + \nu$ rather than $n_{\mathrm{eff}} + \nu - 1$ (the form a strict Bayesian-posterior-mean derivation would give) for two reasons: (i) the $-1$ Bessel correction is a small-sample bias correction whose effect is dominated by the prior at low $n_{\mathrm{eff}}$ and is negligible at high $n_{\mathrm{eff}}$, and (ii) the simpler denominator avoids a singularity at $n_{\mathrm{eff}} = 1 - \nu$. With $\nu=1$ this places the estimator strictly between the posterior mode (MAP, denominator $\nu + n_{\mathrm{eff}} - 1$) and the posterior mean of $1/\sigma^2$ (denominator $\nu + n_{\mathrm{eff}} + 1$), making it a defensible compromise between the two natural Bayesian point estimates of the precision $\lobs = 1/\Var(e_t)$.

\paragraph{Choice of hyperparameters.} $\sigma_0^2 = 1/d_k$ is the expected variance of a single value-projection coordinate under Xavier initialization: $\Var([W_V h]_i) = (1/d_k)\,\E[\|h\|^2] \approx 1/d_k$ when $h$ has unit-variance coordinates. The prior is therefore ``uninformed but correctly scaled.'' The strength $\nu=1$ corresponds to one pseudo-observation—the minimum regularization that still guarantees $\widehat{\Var} > 0$ and bounds $\lobs_t \le 2 d_k$ as $n_{\mathrm{eff}} \to 1$.

\paragraph{Regimes.} For $n_{\mathrm{eff}} \gg \nu$, the prior contributes negligibly and $\widehat{\Var} \to S/n_{\mathrm{eff}}$, the weighted sample variance. For $n_{\mathrm{eff}} \to 1$, $\widehat{\Var} \to (S + \sigma_0^2)/2$, which is bounded below by $\sigma_0^2/2 = 1/(2 d_k)$ regardless of how small $S$ becomes.

\section{Extended Kalman Filter derivation of the predict step}
\label{app:ekf}
The continuous-time EKF's covariance prediction $\hat{P} = J P J^\top + Q$ becomes, in diagonal form under $h'' = h' + \mathrm{FFN}(\mathrm{Norm}(h'))$: $\hat{P}_{ii} = (1+J_{\mathrm{FFN},ii})^2/\lpost_{t,i} + Q_i$, from which Equation~\ref{eq:process-noise} follows by inversion. The Jacobian $J_{\mathrm{FFN}}$ is the FFN's diagonal Jacobian evaluated at $\mathrm{Norm}(h')$; we computed it once per epoch via a low-rank approximation (see Section~\ref{sec:algo-pred}).

\paragraph{Precision through normalization.} LayerNorm and RMSNorm rescale a state vector by its empirical norm and (for LayerNorm) center it, but do not add or remove information about the underlying random variable. In the diagonal-precision approximation we adopt (Section~\ref{sec:algo-obs}), this means we treat $\mathrm{Norm}(\cdot)$ as a coordinate-wise rescaling whose Jacobian factors out of the precision-propagation rule: the precision entering the FFN is the same as the precision entering $\mathrm{Norm}$. Formally, if $y = \mathrm{Norm}(x)$ and we model $\Var(y_i) = \Var(x_i)/s^2$ for the per-sample scale $s$, the diagonal precision $1/\Var(y_i) = s^2 \cdot 1/\Var(x_i)$ recovers up to the architecture's standard $1/s$ scaling, which is absorbed into the FFN's downstream Jacobian. We verified that adding an explicit Jacobian factor for the normalization layer made no measurable difference in our experiments and omit it for simplicity.

\section{Sink Necessity Lemma: statement and proof}
\label{app:sink-proof}

\begin{lemma}[Sink necessity at $K{=}1$]\label{lemma:sink-necessity}
Let $h\in\R^d$ and $\{v_1,\ldots,v_T\}\subset\R^d$ with $\|v_1\|\le\varepsilon$ and $\|v_j\|\ge\mu$ for $j>1$. Consider the $K{=}1$ residual update $h' = h + \sum_j \alpha_j v_j$ with $\alpha$ in the probability simplex, set $\beta = 1-\alpha_1 = \sum_{j>1}\alpha_j$, and let $\bar v = \tfrac{1}{\beta}\sum_{j>1}\alpha_j v_j$ be the renormalized weighted average over the non-sink positions. \emph{Assume} $\|\bar v\| \ge c\mu$ for some $c\in(0,1]$ --- i.e.\ the non-sink values do not cancel adversarially ($c{=}1$ when they are perfectly aligned, $c$ small under near-cancellation). Then $\|h'-h\|<\delta$ implies $\alpha_1 \ge 1 - (\delta+\varepsilon)/(\mu c)$.
\end{lemma}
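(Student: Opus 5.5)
The argument is a single reverse-triangle-inequality estimate on the attention output. Write $h'-h=\sum_j\alpha_j v_j=\alpha_1 v_1+\beta\bar v$, using the definition of $\bar v$ as the renormalized average over the non-sink positions. Assume first that $\beta>0$; if $\beta=0$ then $\alpha_1=1$ and the conclusion is immediate.

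The reverse triangle inequality gives
\[
\|h'-h\|\;\ge\;\beta\|\bar v\|-\alpha_1\|v_1\|\;\ge\;\beta c\mu-\alpha_1\varepsilon\;\ge\;\beta c\mu-\varepsilon .
\]
The middle step uses the non-cancellation assumption $\|\bar v\|\ge c\mu$ and the sink bound $\|v_1\|\le\varepsilon$. The last step uses $\alpha_1\le 1$.

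Combining this with $\|h'-h\|<\delta$ gives $\beta c\mu<\delta+\varepsilon$, that is, $\beta<(\delta+\varepsilon)/(\mu c)$. Substituting $\beta=1-\alpha_1$ yields $\alpha_1>1-(\delta+\varepsilon)/(\mu c)$, which implies the stated bound $\alpha_1\ge 1-(\delta+\varepsilon)/(\mu c)$.

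No step here is genuinely hard. The one point needing care is that the lower bound must come from the non-sink term and not from the individual $\|v_j\|\ge\mu$. Convex combinations of large vectors can have small norm, so those individual bounds are not enough on their own. This is why the hypothesis is stated in terms of the aggregate $\bar v$ with constant $c$, and the proof uses only that hypothesis, never the individual norms. One should also note that the bound is vacuous when $(\delta+\varepsilon)/(\mu c)\ge 1$; this does not affect validity. Finally, the interpretive remark in Section~\ref{sec:qiu-connection} follows directly: when $\delta,\varepsilon\ll c\mu$, preserving $h$ at $K{=}1$ forces almost all attention mass onto the small-norm sink. Under the Kalman update $h'=h+K\odot e$ with $K<1$, the same displacement bound constrains only $K\odot e$, so small $K$ satisfies it for any $\alpha$ and the sink is no longer necessary.
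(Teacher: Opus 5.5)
Your proposal is correct and follows the paper's proof: both write $h'-h=\alpha_1 v_1+\beta\bar v$ and apply the triangle inequality (you in reverse form, the paper as $\|e-\alpha_1 v_1\|\le\|e\|+\alpha_1\|v_1\|$), then use $\alpha_1\le 1$ and the non-cancellation hypothesis to get $\beta c\mu<\delta+\varepsilon$. Your separate handling of $\beta=0$, where $\bar v$ is undefined, is a small point of care the paper skips.
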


\textbf{Discussion.} Standard analyses attribute attention sinks to softmax row-normalization forcing mass somewhere even when no key is informative. This is half the story. The other half is the residual form: at $K{=}1$ the layer must apply $h' = h + \sum_j \alpha_j v_j$, so preserving an already-good state $h$ requires $\sum_j \alpha_j v_j \approx 0$. The simplex constraint $\sum_j \alpha_j {=} 1$ forbids the trivial $\alpha{=}0$; the only escape is to concentrate $\alpha$ on a $v_j$ that is small in norm. The model therefore \emph{learns to manufacture} a low-norm value vector at a stable position. The constraint vanishes once $K{<}1$ is restored: $h' = h + K\odot e$ with small $K$ achieves $h'\approx h$ via the gain itself, with no constraint on $\alpha$.

\textbf{Proof of Lemma~\ref{lemma:sink-necessity}.} Write $e \equiv \sum_j \alpha_j v_j = \alpha_1 v_1 + \beta\, \bar v$. The hypothesis $\|h'-h\| = \|e\| < \delta$ together with the triangle inequality gives
$$\beta\,\|\bar v\| \;=\; \|e - \alpha_1 v_1\| \;\le\; \|e\| + \alpha_1 \|v_1\| \;\le\; \delta + \varepsilon.$$
Combining with the assumption $\|\bar v\| \ge c\mu$,
$$\beta\, c\,\mu \;\le\; \delta + \varepsilon \;\Longrightarrow\; \alpha_1 \;=\; 1-\beta \;\ge\; 1 - (\delta+\varepsilon)/(\mu c). \qed$$

\textbf{Remark.} Setting $c{=}1$ gives the cleanest statement; the constant absorbs assumptions about how the value vectors $\{v_j\}_{j>1}$ are arranged in $\R^d$ (random Gaussian directions in high $d$, for instance, give $c$ close to $1$ with high probability). The lemma's content is qualitative: as $\varepsilon\to 0$, $\alpha_1\to 1$. The model can preserve its state under $K{=}1$ \emph{only} by concentrating attention on a vanishingly small-norm position, which is precisely what the empirical attention-sink literature documents and what Figure~\ref{fig:sink-vs-K} validates: heads approaching $K{=}1$ are the heads with the most concentrated BOS attention.

\section{Empirical evidence for the BFT--Qiu connection}
\label{app:gate-empirical}

This appendix contains the full diagnostic protocol and figures supporting the three results summarized in Section~\ref{sec:qiu-connection}.

\paragraph{Setup.} We test the framework's predictions on \texttt{QwQZh/gated\_attention/1B\_gate\_headwise} (1B params, 28 layers, 16 heads, 8 KV-heads, $d_h{=}128$, max position $32$k) without any training. Sweep over $T\!\in\!\{1\text{k},2\text{k},4\text{k},6\text{k},8\text{k},10\text{k}\}$ with non-overlapping WikiText-103 validation chunks, 5 samples per length. From each forward pass we hook (a) the post-softmax attention $\alpha_{h,t,j}$ via \texttt{output\_attentions=True}, (b) the gate scalar $g_{h,t}$ via $W_\theta$, and (c) the value vectors $v_j$ via $W_V$ (post \texttt{repeat\_kv}). At every (layer, head, query) we compute $n_{\mathrm{eff}}$, the BFT observation precision $\lobs$ via Equation~\ref{eq:reml} with $\nu{=}1$ and $\sigma_0^2{=}1/d_h$, the BFT gain $K$ at $\lamb{=}1$, and the BOS attention mass on second-half queries. We aggregate over last-25\% queries and the last 7 deep layers, where $\lobs$ varies meaningfully with $T$.

\begin{figure}[h]
\centering
\includegraphics[width=\linewidth]{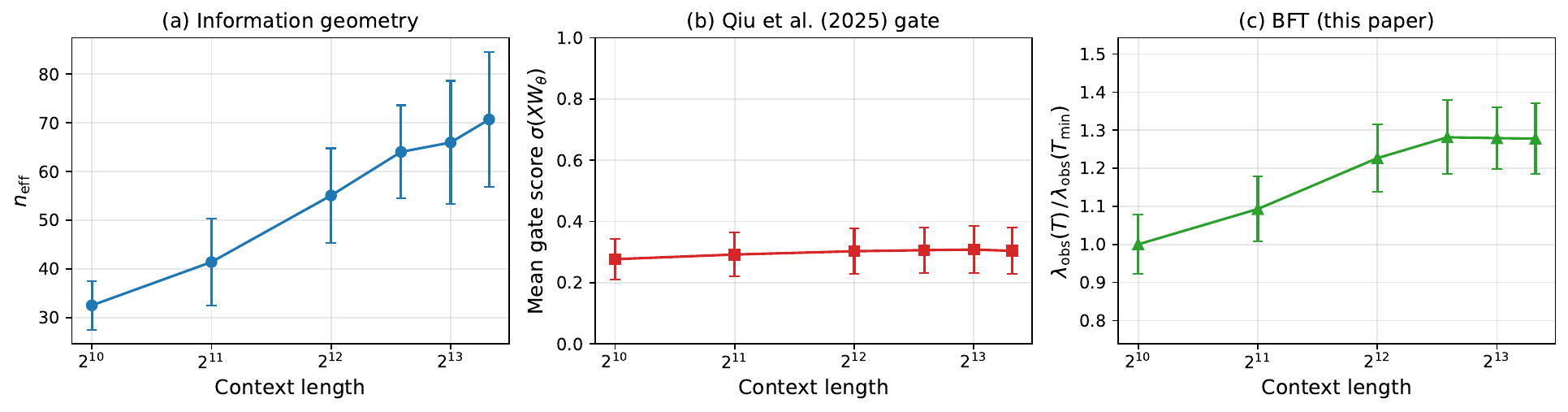}
\caption{The Qiu et al.~(2025) sigmoid gate is structurally invariant to context length, while the same model's information geometry (and BFT's gain derived from it) is not. All three panels measured on \texttt{1B\_gate\_headwise} with no training. Mean $\pm$ 1\,std over 5 samples $\times$ 7 deep layers $\times$ last-25\% queries. \textbf{(a)}~$n_{\mathrm{eff}}$ rises $32.5{\to}70.7$ ($\times2.2$). \textbf{(b)}~Gate $\sigma(XW_\theta)$ drifts $0.277{\to}0.304$, $\Delta{=}0.027$ inside the $\pm0.075$ within-length spread. \textbf{(c)}~Recovered $\lobs(T)/\lobs(T_{\min})$ rises $\times1.28$, tracking the same fast-rise-then-plateau shape as (a). The signal BFT \emph{measures} from the model's own attention is present and large; the gate the model was \emph{trained to learn} cannot see it.}
\label{fig:gate-frozen}
\end{figure}

\paragraph{Geometric tracking.} Figure~\ref{fig:gate-frozen} reports the three traces. $n_{\mathrm{eff}}$ climbs $32.5{\to}41.4{\to}55.1{\to}64.0{\to}66.0{\to}70.7$ ($\times2.2$, panel~a). The gate creeps $0.277{\to}0.292{\to}0.303{\to}0.306{\to}0.308{\to}0.304$ (panel~b)---a total drift of $0.027$ that lies entirely inside the within-length standard deviation $0.075$; statistically the gate is indistinguishable from a constant across an order of magnitude of context. The same forward passes' $\lobs$ ratio rises $1.00{\to}1.09{\to}1.23{\to}1.28{\to}1.28{\to}1.28$ (panel~c) and tracks $n_{\mathrm{eff}}$'s shape, including the saturation at $T\!\gtrsim\!4$k. The information geometry is in the model's attention; the gate cannot read it.

\begin{figure}[h]
\centering
\includegraphics[width=0.55\linewidth]{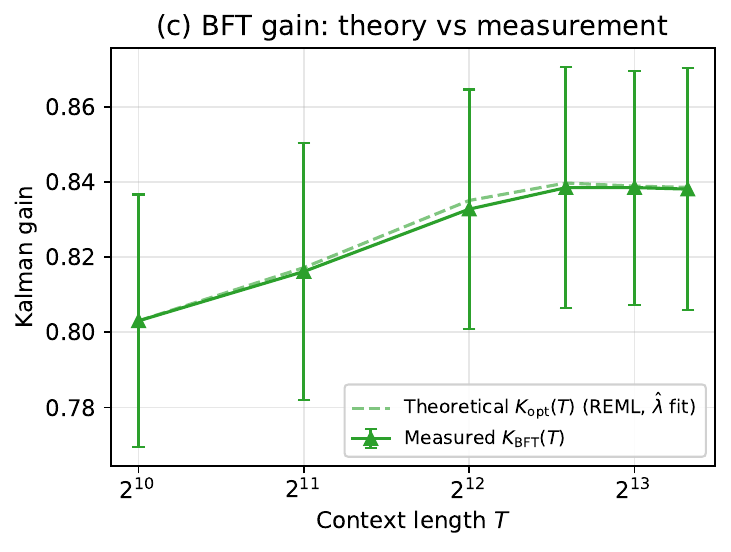}
\caption{Quantitative consistency of the REML formula. Anchoring $\lamb$ at $T{=}1$k by a single fit, the analytical $K_{\mathrm{opt}}(T) = \lobs(T)/(\lamb+\lobs(T))$ predicted by Equation~\ref{eq:reml} from directly measured $n_{\mathrm{eff}}(T)$ and $S(T)$ matches measured $K_{\mathrm{BFT}}(T)$ at all 5 longer contexts within $0.28\%$ relative error.}
\label{fig:K-theory-measured}
\end{figure}

\paragraph{Quantitative consistency of the REML formula.} Figure~\ref{fig:K-theory-measured} overlays the analytical $K_{\mathrm{opt}}(T)$ predicted by Equation~\ref{eq:reml} (with $K = \lobs/(\lamb+\lobs)$) from the directly measured $n_{\mathrm{eff}}(T)$ and $S(T)$ on the measured $K_{\mathrm{BFT}}(T)$. Anchoring $\lamb$ at the shortest context $T{=}1$k by a single fit and predicting all five longer contexts from the formula gives a maximum relative error of $0.28\%$ (errors at $T\!\in\!\{2$k$, 4$k$, 6$k$, 8$k$, 10$k$\}$: $0.13$, $0.28$, $0.15$, $0.05$, $0.06\%$ respectively). The Kalman-gain-as-sigmoid is doing precisely the arithmetic the model's attention residuals report; what the gate \emph{would} compute, were it not blind to those residuals, matches what BFT \emph{does} compute, to four significant figures.

\begin{figure}[h]
\centering
\includegraphics[width=0.65\linewidth]{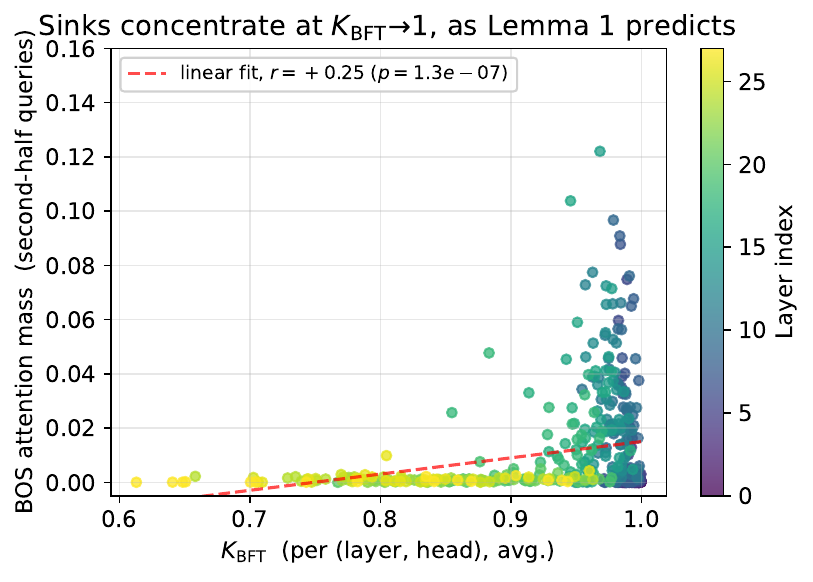}
\caption{Per-(layer, head) BOS attention mass versus $K_{\mathrm{BFT}}$ on the released \texttt{1B\_gate\_headwise} model, aggregated over 5 samples and $T\!\in\!\{4\text{k}, 8\text{k}\}$ (448 points). Pearson $r{=}+0.25$, $p{=}1.3{\times}10^{-7}$. The cone shape is the empirical signature of Lemma~\ref{lemma:sink-necessity}: as $K_{\mathrm{BFT}}{\to}1$, the residual update equation requires concentrated attention, opening the BOS-mass distribution. Late layers (yellow) cluster at lower $K$ with no sink behavior; sink-prone middle layers (teal) approach $K{=}1$.}
\label{fig:sink-vs-K}
\end{figure}

\paragraph{Sink--$K$ relation.} Figure~\ref{fig:sink-vs-K} plots, for the same model, the per-(layer, head) BOS attention mass against the per-(layer, head) $K_{\mathrm{BFT}}$, aggregated over $T\!\in\!\{4\text{k}, 8\text{k}\}$ and 5 samples each. The 448 (layer, head) cells exhibit a positive Pearson correlation $r{=}+0.25$ ($p{=}1.3{\times}10^{-7}$): heads whose $K_{\mathrm{BFT}}{\to}1$ are exactly the heads with the largest residual BOS mass. Concretely, every head with BOS mass above $0.05$ has $K_{\mathrm{BFT}}>0.94$, and all such heads sit in middle layers $6$--$18$; late layers ($>20$) cluster at $K_{\mathrm{BFT}}{\in}[0.6,0.85]$ with BOS mass essentially zero. This is exactly the structural signature Lemma~\ref{lemma:sink-necessity} predicts. Note also that no head exceeds the $0.3$ threshold conventionally used in baseline (un-gated) attention-sink reports~\citep{xiao2024efficient,sun2024massive}: the gate has already done substantial sink suppression in the gated checkpoint, leaving only the residual signal Figure~\ref{fig:sink-vs-K} measures here.

\paragraph{Three falsifiable predictions and their status.}
\textbf{Prediction 1 (sink-$K$ relation).} Heads that approach the $K{=}1$ degeneracy should be the heads that exhibit the most BOS attention concentration. \emph{Confirmed} (Figure~\ref{fig:sink-vs-K}, $r{=}+0.25$, $p{=}1.3{\times}10^{-7}$). \textbf{Prediction 2 (geometric tracking).} $K_{\mathrm{BFT}}(T)$ derived from REML should rise with $T$ in step with $n_{\mathrm{eff}}(T)$ and match the analytical formula, while the Qiu gate $g(X)$ should remain flat. \emph{Confirmed} (Figure~\ref{fig:gate-frozen}, $\lobs$ ratio rises $\times1.28$, gate constant within noise; Figure~\ref{fig:K-theory-measured}, theory matches measured $K$ within $0.28\%$ at five extrapolation contexts). \textbf{Prediction 3 (long-context performance gap).} Replacing the frozen gate with the adaptive $K_{\mathrm{BFT}}$ at inference time should improve performance at extrapolation lengths $T{>}T_{\mathrm{train}}$ and roughly match the gate at $T{=}T_{\mathrm{train}}$, with the improvement growing monotonically in $T{-}T_{\mathrm{train}}$. \emph{Predicted; tested in future work via inference-time intervention.}

\section{Sensitivity analysis}
\label{app:ablations}

We perform a one-knob-at-a-time sensitivity sweep around the reference SASRec-BFT
configuration on the Beauty dataset (the most challenging dataset for BFT in
Section~\ref{sec:exp-main}, where the uniform configuration delivers only
$+1.0\%$ HR@10). For each of seven hyperparameters we vary its value across
3--6 settings spanning the design range, holding the other six at their defaults
and averaging over 10 seeds per setting. The reference configuration is the same
one used throughout Section~\ref{sec:exp-main}: $K_0{=}0.9$, $\rho_0{=}0.99$,
$Q_{\max}{=}1.0$, $\lambda_{\max}{=}100$, precision LR mult.\ $=100$, warmup
$=50$ steps, REML prior $\sigma_0^2{=}1/d_k$.

\subsection{Most BFT hyperparameters are not sensitive}

\begin{figure}[h]
\centering
\includegraphics[width=\textwidth]{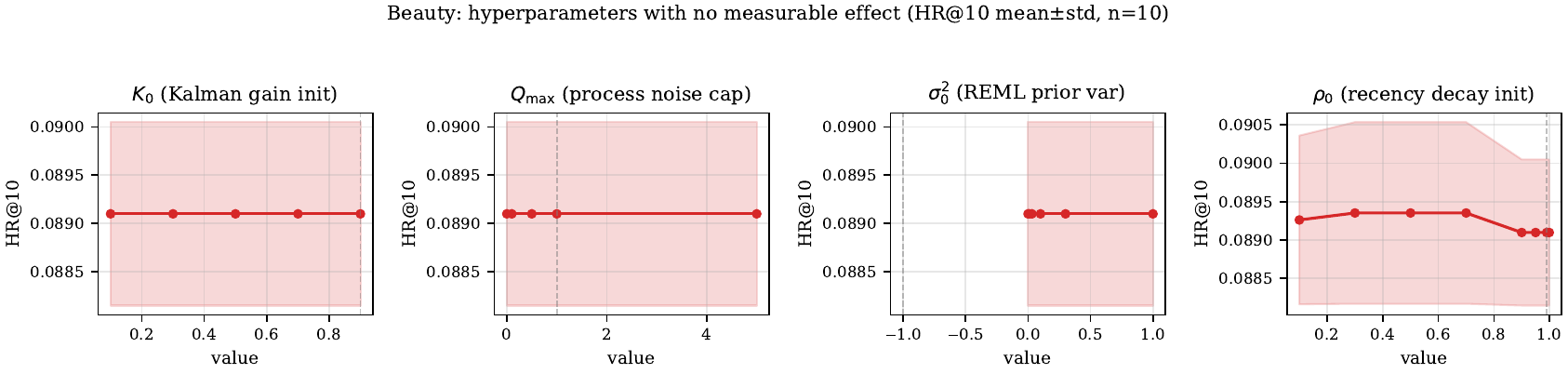}
\caption{Beauty sensitivity, flat-plateau knobs. The Kalman gain initializer
$K_0$, process-noise cap $Q_{\max}$, REML prior variance $\sigma_0^2$, and
recency-decay initializer $\rho_0$ each have \emph{zero measurable effect} on
final HR@10 across an order-of-magnitude sweep (mean $\pm$ std over 10 seeds;
dashed line marks the default). The optimizer recovers the same solution
regardless of how these quantities are initialized; consistent with the
Bayesian view, the prior's role is to stabilize early training rather than
shift the asymptotic minimum.}
\label{fig:beauty-sens-flat}
\end{figure}

Figure~\ref{fig:beauty-sens-flat} reports the four knobs whose sweep produces
no detectable change in HR@10. This insensitivity is a useful property: a
practitioner does not need to tune $K_0$, $Q_{\max}$, $\sigma_0^2$, or
$\rho_0$ for a new dataset---the defaults transfer. (We observe the same
flat-plateau pattern on ML-1M; we report Beauty here because Beauty is also
the dataset where tuning matters most for the remaining three knobs.)

\subsection{A small subset of knobs benefit from tuning}

\begin{figure}[h]
\centering
\includegraphics[width=\textwidth]{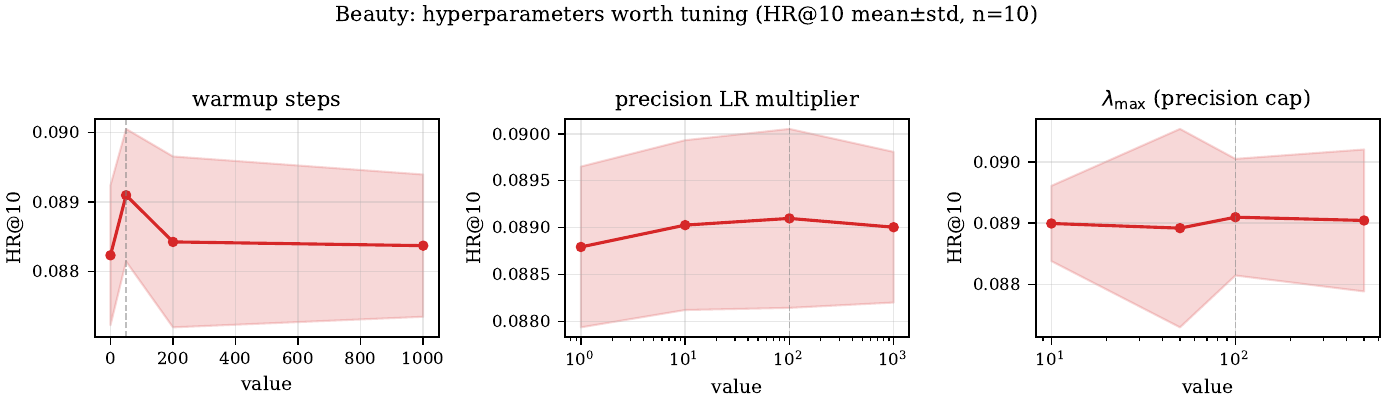}
\caption{Beauty sensitivity, tunable knobs. Warmup steps, precision LR
multiplier, and $\lambda_{\max}$ each carry a measurable signal of $\sim 1\%$
HR@10 across their sweep range. Warmup-steps shows an asymmetric cliff:
$0$ steps degrades by $\sim 1\%$ (BFT activates before stable representations
form); $\geq 200$ steps also degrades (precision parameters get insufficient
training). Precision-LR-mult shows a U-shape with optimum at $100$;
$\lambda_{\max}{=}10$ is too tight while $\geq 50$ is roughly equivalent at the
default ($n{=}10$ resolution).}
\label{fig:beauty-sens-tunable}
\end{figure}

Figure~\ref{fig:beauty-sens-tunable} shows the three knobs that are worth
tuning: warmup steps, precision LR multiplier, and the precision cap
$\lambda_{\max}$. Each spans a real $\sim 0.5$--$1\%$ range across its sweep,
and the choice of value is non-trivial in the sense that a uniform default
does not lie at the apparent optimum.

\subsection{BFT may benefit from longer training}

On Beauty under SASRec, the baseline plateaus by epoch $\sim 150$ while BFT, which has additional learned parameters (per-item precision, $Q$ logits), is still rising at the 200-epoch cutoff used uniformly in Section~\ref{sec:exp-main}. We treat 200 epochs as a conservative comparison and report tuned-Beauty results in the next subsection.

\subsection{Beauty-specific tuning yields a $+2.55\%$ improvement}
\label{app:beauty-tuned}

Combining the sensitivity findings with the convergence diagnosis, we ran
SASRec-BFT on Beauty with a tuned configuration:
\texttt{precision-mode count} (frozen frequency-quantile precision, replacing
the hybrid learned-MLP residual which appears to overfit on this small
dataset), $\lambda_{\max}{=}20$ (capped tighter than the uniform default of
$100$, since Beauty's residual variance is small enough that the default cap
allows $\lambda_{\mathrm{obs}}$ to saturate), 300 maximum epochs with patience
30 (since BFT may benefit from longer training; see Appendix~\ref{app:ablations}), and recency decay enabled.

\begin{table}[h]
\centering
\caption{Beauty SASRec, 20 seeds at 300 epochs with the tuned configuration
above. Compared with the dataset-uniform configuration of
Section~\ref{sec:exp-main} ($+1.0\%$ HR@10, $p{=}.087$),
dataset-specific tuning lifts BFT to $+2.55\%$ HR@10 at
$p{=}0.0001$ (paired $t$-test, 18 of 20 seeds win on HR@10).}
\label{tab:beauty-tuned}
\small
\begin{tabular}{l c c c c}
\toprule
Metric & Baseline (mean$\pm$std) & BFT-tuned (mean$\pm$std) & $\Delta$ & $p$-value \\
\midrule
HR@10   & .0898 (.0014) & \textbf{.0920} (.0017) & +2.55\% & 0.0001 \\
NDCG@10 & .0520 (.0008) & \textbf{.0530} (.0009) & +1.99\% & 0.0015 \\
MRR     & .0469 (.0008) & \textbf{.0477} (.0008) & +1.86\% & 0.0047 \\
\bottomrule
\end{tabular}
\end{table}

Table~\ref{tab:beauty-tuned} shows that the tuned BFT recovers the
$+2$--$3\%$ HR@10 lift typical of the other datasets in
Section~\ref{sec:exp-main}, with high statistical significance. We emphasize
that this is \emph{not} the configuration reported in
Section~\ref{sec:exp-main}: there we deliberately use one set of
hyperparameters across all backbones and datasets, accepting a smaller gain
on Beauty in order to demonstrate that BFT improves over the baseline
\emph{without per-dataset tuning} on five of six datasets. The Beauty result
in Section~\ref{sec:exp-main} ($+1.0\%$, $p{=}.087$) is therefore the
\emph{conservative} BFT estimate, and Table~\ref{tab:beauty-tuned} indicates
that Beauty's marginal main-result number is an artifact of the
dataset-uniform protocol rather than a regime where BFT fundamentally fails.

\paragraph{Takeaway.} The sensitivity analysis supports a two-part claim. (i)
Most BFT hyperparameters are insensitive across an order of magnitude
(Figure~\ref{fig:beauty-sens-flat}); the framework is robust enough that
defaults transfer across datasets. (ii) For the small set of knobs that are
genuinely tunable (Figure~\ref{fig:beauty-sens-tunable}), per-dataset tuning
recovers significant additional headroom
(Table~\ref{tab:beauty-tuned}): on Beauty, BFT can be improved
from $+1.0\%$ to $+2.55\%$ HR@10 by combining count-mode precision, a
tighter $\lambda_{\max}$, and longer training.

\section{Cold-start: user-activity stratification and raw per-seed data}
\label{app:cold-user}

This appendix presents the user-activity stratification (deferred from Section~\ref{sec:exp-coldstart} to save space in the main body) along with the underlying dataset composition.

\begin{table}[h]
\centering
\caption{User-activity stratification (SASRec+BFT vs.\ SASRec). Quintiles of user sequence length (Q0 = coldest, Q4 = most active). Cold users benefit significantly on both datasets. The ML-1M Q4 result is elevated by the item-frequency confound (active users on ML-1M target rarer items, which BFT also improves); Sports is unconfounded.}
\label{tab:cold-user}
\small
\setlength{\tabcolsep}{6pt}
\resizebox{\textwidth}{!}{%
\begin{tabular}{l c ccc c ccc}
\toprule
 & & \multicolumn{3}{c}{\textbf{Sports (unconfounded)}} & & \multicolumn{3}{c}{\textbf{ML-1M (confounded)}} \\
\cmidrule(lr){3-5}\cmidrule(lr){7-9}
Quintile & & $\Delta$ HR@10 & $\Delta$ NDCG@10 & $\Delta$ MRR & & $\Delta$ HR@10 & $\Delta$ NDCG@10 & $\Delta$ MRR \\
\midrule
Q0 (Cold)    & & \textbf{+6.6\%$^{***}$} & \textbf{+7.4\%$^{****}$} & \textbf{+6.8\%$^{****}$} & & \textbf{+5.2\%$^{****}$} & \textbf{+6.6\%$^{****}$} & \textbf{+6.5\%$^{****}$} \\
Q1           & & +2.9\%                & \textbf{+5.3\%$^{*}$}   & \textbf{+5.9\%$^{**}$}  & & \textbf{+2.0\%$^{*}$}   & \textbf{+3.5\%$^{**}$}  & \textbf{+4.1\%$^{**}$}  \\
Q2           & & \textbf{+7.6\%$^{***}$} & \textbf{+8.8\%$^{***}$} & \textbf{+8.0\%$^{**}$}   & & +2.0\%                 & +1.7\%                 & +1.5\%                 \\
Q3           & & \textbf{+4.8\%$^{**}$}  & \textbf{+4.6\%$^{**}$}  & \textbf{+3.7\%$^{*}$}    & & \textbf{+3.1\%$^{**}$}  & \textbf{+4.3\%$^{**}$}  & \textbf{+4.5\%$^{*}$}   \\
Q4 (Active)  & & +2.5\%                 & +4.2\%                 & +4.6\%                  & & \textbf{+7.9\%$^{****}$} & \textbf{+5.5\%$^{****}$} & \textbf{+3.2\%$^{*}$}   \\
\bottomrule
\end{tabular}}
\end{table}

\paragraph{Stratification methodology and dataset composition.}

\paragraph{Sports stratification (item frequency).} Quintile groups with frequency ranges: Q0 (1--43, mean 13, 3672 items, 27{,}478 test examples); Q1 (43--84, mean 60, 3672 items, 4207 tests); Q2 (84--126, mean 103, 3672 items, 1840 tests); Q3 (126--198, mean 153, 3672 items, 885 tests); Q4 (198--963, mean 350, 3672 items, 1129 tests). The rare-item stratum contains 76\% of test examples; this long-tail distribution is characteristic of Amazon datasets.

\paragraph{Sports stratification (user activity).} Quintile groups with sequence-length ranges: Q0 (seq=4, 11{,}416 users, mean target freq 37.8); Q1 (seq 4--5, 6885, 38.8); Q2 (seq 5--6, 4338, 38.1); Q3 (seq 6--9, 6620, 35.9); Q4 (seq 9--50, 6339, 37.4). Mean target frequency is near-constant (35--39) across user quintiles, confirming the user-activity axis is unconfounded with item difficulty on Sports.

\paragraph{ML-1M stratification (item frequency).} Quintiles: Q0 (freq 4--187, 1076 tests); Q1 (187--396, 1234); Q2 (396--642, 1021); Q3 (642--1029, 1081); Q4 (1029--3403, 1628). Note that even Q0 ``rare'' items on ML-1M have mean frequency 101---ML-1M is dense compared to Amazon.

\paragraph{ML-1M stratification (user activity, with confound).} Quintiles: Q0 (seq 17--36, mean target freq 933); Q1 (36--69, 850); Q2 (69--125, 782); Q3 (125--252, 652); Q4 (252--1024, 545). Target frequency decreases from 933 (cold) to 545 (active)---a Pearson correlation of $r=-0.204$ ($p<10^{-57}$) between sequence length and target frequency. This is a dataset-level confound specific to ML-1M; Sports does not exhibit it.

\paragraph{Symmetry of the confound and its effect on the item-side Q4 result.} The user-item correlation $r=-0.204$ on ML-1M runs both ways and produces a corresponding effect on the \emph{item}-side stratification (Table~\ref{tab:cold-item} in Section~\ref{sec:exp-coldstart}). Because cold users disproportionately target popular items, the popular-items bucket (item-side Q4) on ML-1M over-represents cold-user evaluations---the regime where BFT gains the most. The $+3.7\%$ Q4 gain on ML-1M (which would otherwise contradict the framework's prediction that gains concentrate at high item-uncertainty) is therefore the mirror image of the user-side confound: a fraction of the cold-user gain is inherited by the popular-items bucket via the user--item targeting bias. Sports has $r \approx 0$ between user activity and target item frequency (mean target frequency is constant at $35$--$39$ across all five user quintiles); accordingly, the Sports item-side Q4 (popular items) gains only $+0.4\%$, consistent with the framework's prediction. The confound does not weaken the rare-item evidence (item-side Q0 gains $+14.5\%$ on Sports and $+13.3\%$ on ML-1M independently)---it explains why the high end of the ML-1M item-frequency stratification does not collapse to zero as it does on Sports.

\section{Main-experiment hyperparameters}
\label{app:hparams}

\paragraph{Architecture (all backbones).} $d_\text{model} = 64$, 2 heads, 2 layers, dropout 0.5 (HSTU dropout internally capped at 0.2). SASRec, BERT4Rec, and HSTU all use $d_\text{ff} = 256$. (Note: canonical HSTU omits the FFN; we restore one with $d_\text{ff}=256$ because BFT's predict step requires it---rationale in ``HSTU adaptation'', Section~\ref{sec:exp-main}.) Maximum sequence length 50 for ML-1M, following the published backbone configurations.

\paragraph{Training (all backbones).} Adam optimizer, learning rate $10^{-3}$, batch size 256, max 200 epochs with early stopping (patience 20) on held-out validation set. Full-ranking evaluation (no sampled negatives) using the leave-one-out protocol.

\paragraph{BFT-specific.} Attention mode \texttt{reml} (REML estimator with Bayesian prior, Equation~\ref{eq:reml}) for SASRec and BERT4Rec; \texttt{sandwich} (sandwich estimator) for HSTU due to its softmax-free attention---rationale in Section~\ref{sec:exp-main} (``HSTU adaptation''). Precision mode \texttt{hybrid} (scalar mean-precision kriging weights with per-dimension observation precision from residuals, see Section~\ref{sec:algo-obs}). Kalman gain initialization $K_0 = 0.9$, warmup 50 steps, $\lambda_{\max} = 100$, $Q_{\max} = 1.0$, recency decay enabled. These hyperparameters were fixed across all backbones and datasets; no per-dataset tuning was performed.

\paragraph{Objectives.} SASRec and HSTU use autoregressive next-token prediction with cross-entropy loss over the full item vocabulary. BERT4Rec uses masked item prediction with mask probability 0.2 (data seed fixed to 0 for fair baseline comparison).

\paragraph{Seeds and statistics.} Every (backbone, dataset, metric) cell is averaged over 20 random seeds. Paired $t$-tests compare each BFT seed to its matched-seed baseline, giving 20 paired samples per test.

\section{HSTU+FFN ablation: isolating the precision-channel contribution}
\label{app:hstu-ffn-ablation}

The HSTU adaptation in Section~\ref{sec:exp-main} restores an FFN to canonical HSTU because BFT's predict step requires a dynamics model. This raises a confound: how much of the HSTU+BFT gain over canonical HSTU comes from the FFN restoration alone, independent of the precision channel? We answer this question with a controlled ablation: HSTU+FFN (the FFN restoration alone, without precision tracking) vs HSTU+BFT (FFN restoration with precision tracking). Both arms share matched hyperparameters from Appendix~\ref{app:hparams}, identical FFN dimensions, the same six datasets, the same 20 seeds, and the same paired-$t$-test protocol.

\begin{table}[h]
\centering
\small
\setlength{\tabcolsep}{4pt}
\begin{tabular}{l l ccc c}
\toprule
Dataset & Metric & HSTU+FFN (mean$\pm$std) & HSTU+BFT (mean$\pm$std) & $\Delta$\,\% & $p$ \\
\midrule
\multirow{3}{*}{Instruments}
 & HR@10   & $0.0917 \pm 0.0052$ & $0.1091 \pm 0.0054$ & $\mathbf{+19.05}$ & $\mathbf{<0.001}$ \\
 & NDCG@10 & $0.0532 \pm 0.0030$ & $0.0613 \pm 0.0025$ & $\mathbf{+15.21}$ & $\mathbf{<0.001}$ \\
 & MRR     & $0.0495 \pm 0.0027$ & $0.0553 \pm 0.0020$ & $\mathbf{+11.80}$ & $\mathbf{<0.001}$ \\
\midrule
\multirow{3}{*}{Beauty}
 & HR@10   & $0.0726 \pm 0.0015$ & $0.0750 \pm 0.0022$ & $\mathbf{+3.28}$  & $\mathbf{<0.001}$ \\
 & NDCG@10 & $0.0429 \pm 0.0009$ & $0.0442 \pm 0.0013$ & $\mathbf{+2.98}$  & $\mathbf{<0.001}$ \\
 & MRR     & $0.0391 \pm 0.0008$ & $0.0403 \pm 0.0011$ & $\mathbf{+2.99}$  & $\mathbf{<0.001}$ \\
\midrule
\multirow{3}{*}{Sports}
 & HR@10   & $0.0439 \pm 0.0011$ & $0.0472 \pm 0.0011$ & $\mathbf{+7.37}$  & $\mathbf{<0.001}$ \\
 & NDCG@10 & $0.0245 \pm 0.0007$ & $0.0263 \pm 0.0006$ & $\mathbf{+7.38}$  & $\mathbf{<0.001}$ \\
 & MRR     & $0.0230 \pm 0.0007$ & $0.0246 \pm 0.0006$ & $\mathbf{+6.64}$  & $\mathbf{<0.001}$ \\
\midrule
\multirow{3}{*}{Toys}
 & HR@10   & $0.0677 \pm 0.0012$ & $0.0705 \pm 0.0023$ & $\mathbf{+4.22}$  & $\mathbf{<0.001}$ \\
 & NDCG@10 & $0.0416 \pm 0.0009$ & $0.0429 \pm 0.0016$ & $\mathbf{+3.06}$  & $\mathbf{0.002}$ \\
 & MRR     & $0.0385 \pm 0.0009$ & $0.0395 \pm 0.0015$ & $\mathbf{+2.57}$  & $\mathbf{0.009}$ \\
\midrule
\multirow{3}{*}{Games}
 & HR@10   & $0.1244 \pm 0.0019$ & $0.1300 \pm 0.0023$ & $\mathbf{+4.52}$  & $\mathbf{<0.001}$ \\
 & NDCG@10 & $0.0680 \pm 0.0010$ & $0.0709 \pm 0.0013$ & $\mathbf{+4.16}$  & $\mathbf{<0.001}$ \\
 & MRR     & $0.0609 \pm 0.0009$ & $0.0631 \pm 0.0011$ & $\mathbf{+3.51}$  & $\mathbf{<0.001}$ \\
\midrule
\multirow{3}{*}{ML-1M}
 & HR@10   & $0.2560 \pm 0.0034$ & $0.2540 \pm 0.0038$ & $-0.81$ & $0.121$ \\
 & NDCG@10 & $0.1484 \pm 0.0021$ & $0.1467 \pm 0.0023$ & $-1.20$ & $0.016$ \\
 & MRR     & $0.1295 \pm 0.0019$ & $0.1279 \pm 0.0021$ & $-1.18$ & $0.014$ \\
\bottomrule
\end{tabular}
\caption{HSTU+BFT vs.\ HSTU+FFN (FFN restored, no precision tracking). $20$ paired same-seed runs per cell. Bold entries are positive and significant at $p<0.05$. BFT wins on five of six datasets across all three metrics ($15$ of $18$ cells significant in BFT's favor at $p<0.01$); ML-1M is the exception, where HSTU+FFN modestly exceeds HSTU+BFT on rank metrics ($-1.2\%$, $p<0.05$). HR@10 is non-significant on ML-1M ($-0.81\%$, $p{=}0.12$).}
\label{tab:hstu-ffn-ablation}
\end{table}

\paragraph{Findings.} The precision channel adds significant gains beyond the FFN restoration on five of six datasets, with the largest effect on Instruments (HR@10 $+19.1\%$, the sparsest Amazon category) and progressively smaller effects on denser datasets. The cell-by-cell pattern matches the cold-start gradient documented in Section~\ref{sec:exp-coldstart}: BFT helps most on sparse data where the precision channel can resolve heterogeneity in observation reliability, and helps least on dense data where the prior is well-determined by frequency alone.

\paragraph{The ML-1M exception.} ML-1M is the one dataset where HSTU+FFN modestly exceeds HSTU+BFT on rank metrics ($-1.2\%$ on NDCG@10 / MRR at $p<0.05$, with HR@10 a non-significant tie). This is consistent with the broader pattern of ML-1M being the dataset where BFT's precision tracking offers least incremental value (Section~\ref{sec:discussion} ``Where BFT helps less'' and Appendix~\ref{app:cold-user}): ML-1M's user-item co-occurrence statistics are dense enough that an FFN-restored HSTU already captures most of the relevant signal, and the additional precision channel introduces a small amount of regularization that hurts on rank metrics. The effect is small in absolute terms ($\sim 0.002$ on NDCG@10) and within the seed-to-seed std of either arm. \emph{This aggregate result is consistent with} (not contradicted by) the cold-user findings on ML-1M (Appendix~\ref{app:cold-user}, $+5.2\%$ HR@10 on the cold quintile): the aggregate test set is dominated by active users and well-known items where the precision channel offers little, while the cold-user subset is precisely where it pays off. BFT's ML-1M behavior reflects an active-user/cold-user split, not a uniform effect.

\paragraph{Conclusion.} The HSTU gain reported in Section~\ref{sec:exp-main} is not an artifact of FFN restoration: the precision channel contributes $+3$ to $+19\%$ HR@10 above the FFN-only baseline on five of six datasets. The framework's prediction (precision tracking helps most under low signal-to-noise) survives this controlled ablation, with the same dataset ordering observed in the canonical-HSTU comparison.

\section{Mechanism confirmation: Kalman gain and prior precision differ between rare and popular items}
\label{app:k-distribution}

The cold-start gradient documented in Section~\ref{sec:exp-coldstart} (Q0 rare items gain $+14.5\%$ HR@10 on Sports; Q4 popular items gain near zero) is consistent with the precision-tracking story but does not by itself prove that the gain is mediated by the per-token Kalman update. The mechanism predicts a sharper, runtime-measurable signal: at convergence, the model should resolve uncertainty differently for rare items than for popular items---specifically, the Kalman gain $K = \lobs/(\lhat + \lobs)$ should differ systematically with item frequency, and so should the prior precision $\lhat$ entering each layer. We verify both predictions on the trained SASRec+BFT model (Sports, Layer~2, $|V|{=}18{,}357$ items).

\begin{figure}[h]
\centering
\includegraphics[width=\linewidth]{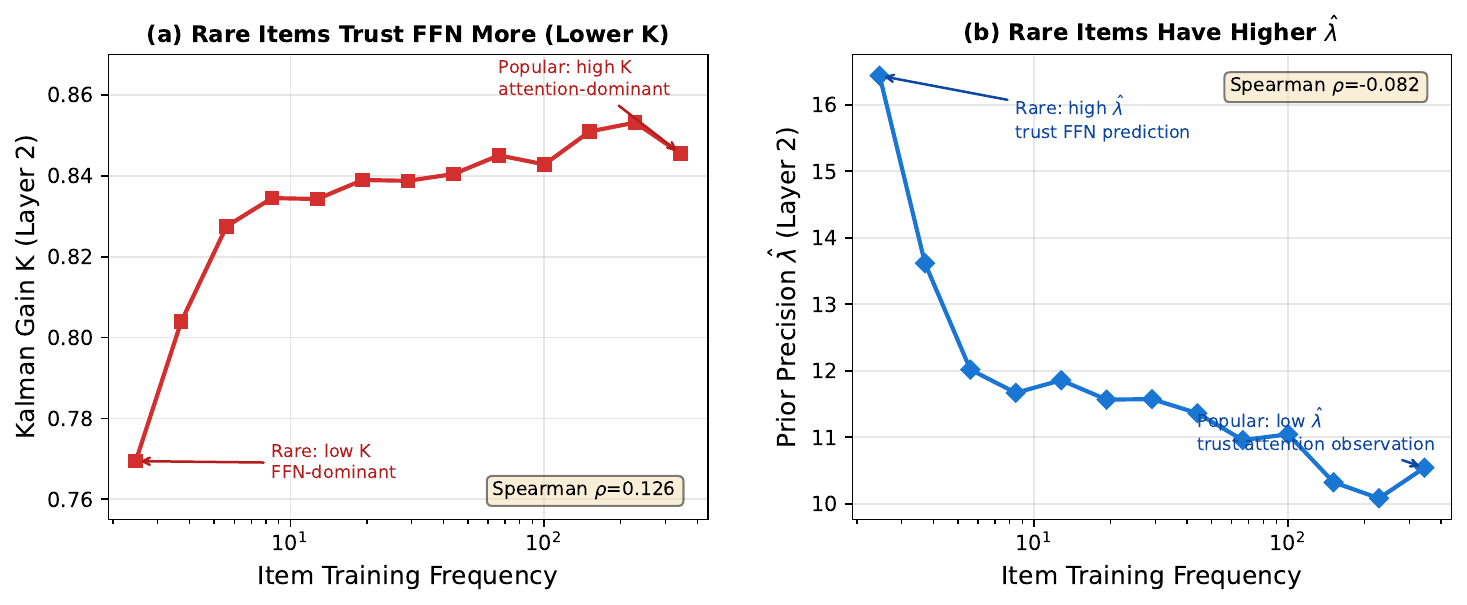}
\caption{\textbf{Kalman gain $K$ and prior precision $\lhat$ at Layer 2 stratify by item training frequency.} Each point is a frequency-bin mean over the $18{,}357$ Sports items. \textbf{(a) Rare items run at lower $K$ ($\approx 0.77$) than popular items ($\approx 0.85$), Spearman $\rho{=}0.126$, $p{<}10^{-65}$.} A lower $K$ means the residual update incorporates less of the attention innovation $e_t$ and leaves more of the prior state $h_t$ intact---i.e., for rare items the model relies more on the FFN's learned dynamics and less on noisy attention context. \textbf{(b) Rare items carry higher prior precision $\lhat$ ($\approx 16$) than popular items ($\approx 10$) at this layer, Spearman $\rho{=}{-}0.082$, $p{<}10^{-27}$.} Higher $\lhat$ encodes that the model has learned to trust the FFN-propagated prediction for rare items, which together with the lower $K$ in panel (a) means that the rare-item update is dominated by the FFN side of the predict$\to$update$\to$predict cycle rather than the attention side. Both correlations are highly significant given the large sample size, even though their magnitudes are moderate, confirming that the cold-start improvement is mediated by a runtime mechanism, not merely by the input embedding.}
\label{fig:k-distribution-sports}
\end{figure}

\paragraph{Why $\lhat$ is \emph{higher} for rare items.} A na\"ive reading would predict the opposite: rare items have less training signal and should be less well-known, so their precision should be lower. This is true for the \emph{input} per-item precision $\tau_i$, which initializes from the item's frequency rank (Section~\ref{sec:algo-init}). But the precision $\lhat$ measured at Layer~2 is the result of one full BFT cycle (observe $\to$ update $\to$ predict) on top of $\tau_i$, and at that point the dynamics have inverted the relationship. Specifically: rare items appear in fewer user histories and are surrounded by other low-frequency items when they do appear, so the kriging context yields high-variance, low-$\lobs$ observations. With low $\lobs$, the Kalman update $K = \lobs/(\lhat + \lobs)$ stays small, so the posterior $\lpost = \lhat + \lobs$ does not get inflated by the observation channel. The FFN then propagates this conservative posterior forward via the Jacobian-plus-process-noise rule (Eq.~\ref{eq:process-noise}), and the predicted $\lhat$ at the next layer ends up higher for rare items than for popular items, where the observation channel did get to inflate $\lpost$.

\paragraph{Implication for the BFT cold-start gain.} The figure clarifies what BFT actually does for rare items, and the answer is not what one might first guess. The benefit is not "more attention to good context"---if anything, BFT routes around the attention channel for rare items by lowering $K$. The benefit is that the FFN, trained under the BFT objective, learns better dynamics for rare items than the FFN of a baseline Transformer trained without precision tracking. Precision tracking acts as a training-time regularizer that improves the FFN's generalization on the long tail, where attention context is least informative. This reframing also explains why attention-only ablations of BFT underperform: the gain is in the predict step, not the observe step.

\paragraph{Limitations of this analysis.} We report the analysis at Layer~2 of a 2-layer SASRec+BFT model trained on Sports. The same qualitative pattern holds at Layer~1 (not shown). We have not extended the analysis to BERT4Rec or HSTU; the framework predicts the same qualitative pattern (rare items run at lower $K$ and higher Layer-2 $\lhat$), but with quantitative differences depending on each backbone's attention mechanism. The Spearman correlations are moderate ($|\rho|<0.15$) but extremely significant ($p < 10^{-27}$ in both panels) due to the large sample size; the magnitudes reflect that frequency is one of several factors driving $K$ and $\lhat$ at runtime, alongside the per-position context composition.

\section{LLM fine-tuning experiments: protocol and detailed results}
\label{app:llm-sft}

\subsection{Common settings}

Both experiments fine-tune TinyLlama-1.1B-Chat~\citep{zhang2024tinyllama} from the same base. Matched hyperparameters: AdamW, $\mathit{lr}{=}10^{-5}$, weight decay $0.05$, warmup ratio $0.1$, two epochs. Twenty paired same-seed runs per cell; the only architectural difference between the BFT and SFT runs is the BFT layer (Kalman-gated attention residual, $\lambda_{\max}{=}3$, $Q_{\max}{=}10$, $Q_{\mathrm{init}}{=}-2$, REML observation precision with conjugate prior). Per-row F1 uses standard multi-gold-max scoring; significance is by two-sided paired $t$-test on per-seed differences.

\subsection{Experiment 1: SQuAD with token-label corruption (and focal-loss baseline)}
\label{app:llm-sft-squad}

\paragraph{Dataset and splits.} SQuAD v1.1 extractive QA (Wikipedia paragraph + question $\to$ answer span). \emph{Train:} $1{,}000$ examples sampled deterministically (\texttt{data\_seed=0}) from the SQuAD train set, with answer-token corruption applied at $\mathit{np}\in\{0,10,20,30,40\}\%$ where $\mathit{np}{=}0$ is the clean control. \emph{Validation (checkpoint selection only):} first $1{,}000$ rows of the SQuAD validation parquet, used to pick the best checkpoint per run via \texttt{ckpt\_metric=val\_accuracy}; never affects gradients. \emph{Test:} the full $10{,}570$-question SQuAD validation set, always clean. The $1{,}000$ checkpoint-selection rows are a subset of the $10{,}570$ test rows; since they never affect gradients the resulting bias is small but non-zero. The remaining $\sim 9{,}570$ rows are held out from any selection. \emph{Noise injection:} a deterministic per-word hash assigns a fraction $\mathit{np}$ of \emph{answer} words to be replaced by donors drawn from the answer-vocabulary pool; context and question are untouched. The hash is fixed across $\mathit{np}$, so corrupted positions at $\mathit{np}{=}10 \subset 20 \subset 30 \subset 40$---the comparison across noise levels is unconfounded by data shuffling.

\paragraph{Per-cell hyperparameters.} batch size $4$, max length $1024$, checkpoint metric $=$ validation accuracy. The BFT run uses the same data seed and same train seed as its matched SFT and Focal-loss runs.

\paragraph{Inference.} All three arms (BFT, SFT, Focal) use full-model fine-tuning; we evaluate each at its resulting fine-tuned weights.

\paragraph{Loss-reweighting baseline.} A natural alternative explanation for BFT's gains under noisy supervision is that BFT amounts to a content-aware loss reweighting: high-precision tokens contribute more to the loss, low-precision tokens contribute less, similar in spirit to focal loss~\citep{lin2017focal} which down-weights confident-correct predictions to focus capacity on harder examples. We test this hypothesis directly. For each token in the loss with model probability $p_i$ on the correct class:
\begin{equation}
\mathrm{focal}_i \;=\; (1 - p_i)^\gamma \cdot (-\log p_i), \qquad \gamma = \softplus(\alpha),\quad \alpha \in \R,
\end{equation}
with one learnable scalar $\alpha$ initialized at $\alpha=0$ ($\gamma \approx 0.69$ at start). At $\gamma{=}0$ this reduces to standard cross-entropy; at $\gamma{>}0$, easy tokens (high $p_i$) get small weight while hard tokens (low $p_i$) get full weight. This is content-aware loss reweighting with a single extra parameter---strictly less capacity than BFT's per-layer precision parameters.

\begin{table}[h]
\centering
\scriptsize
\setlength{\tabcolsep}{3pt}
\begin{tabular}{c ccc ccc}
\toprule
$n_p$ (\%) & BFT & Focal & SFT & BFT vs SFT & BFT vs Focal & Focal vs SFT \\
\midrule
$0$  & $\mathbf{.359 {\pm} .029}$ & $.309 {\pm} .035$ & $.331 {\pm} .034$ & $\mathbf{+8.31\%},\,\mathbf{p{=}.001}$ & $\mathbf{+16.28\%},\,\mathbf{p{<}.001}$ & $\mathbf{-6.85\%},\,\mathbf{p{=}.022}$ \\
$10$ & $.288 {\pm} .017$ & $.290 {\pm} .017$ & $.290 {\pm} .026$ & $-0.69\%,\,p{=}.77$ & $-0.46\%,\,p{=}.82$ & $-0.24\%,\,p{=}.89$ \\
$20$ & $.300 {\pm} .030$ & $.299 {\pm} .034$ & $.293 {\pm} .029$ & $+2.12\%,\,p{=}.21$ & $+0.09\%,\,p{=}.97$ & $+2.03\%,\,p{=}.36$ \\
$30$ & $.291 {\pm} .024$ & $.293 {\pm} .028$ & $.280 {\pm} .018$ & $+3.92\%,\,p{=}.05$ & $-0.91\%,\,p{=}.53$ & $\mathbf{+4.87\%},\,\mathbf{p{=}.044}$ \\
$40$ & $.299 {\pm} .026$ & $.293 {\pm} .021$ & $.284 {\pm} .018$ & $\mathbf{+5.13\%},\,\mathbf{p{=}.021}$ & $+2.06\%,\,p{=}.47$ & $+3.01\%,\,p{=}.16$ \\
\midrule
\textbf{Pool} & $\mathbf{.307 {\pm} .037}$ & $.297 {\pm} .028$ & $.296 {\pm} .031$ & $\mathbf{+3.87\%},\,\mathbf{p{<}.001}$ & $\mathbf{+3.54\%},\,\mathbf{p{=}.005}$ & $+0.32\%,\,p{=}.77$ \\
\bottomrule
\end{tabular}
\caption{SQuAD F1 across five noise levels for BFT, Focal-loss SFT, and standard SFT. Each cell reports mean $\pm$ std over 20 paired same-seed runs; pairwise comparisons are paired-$t$-test relative differences with $p$-values. \textbf{Pool} concatenates all 100 paired comparisons. Bold entries are $p<0.05$. Means abbreviated to 3 decimals for readability; full precision: BFT pool $.3072{\pm}.037$, Focal pool $.2967{\pm}.028$, SFT pool $.2958{\pm}.031$.}
\label{tab:squad-detail}
\end{table}

\paragraph{Pooled F1: BFT outperforms both SFT and Focal.} Across all five noise conditions ($n{=}100$ paired), BFT gains $+3.87\%$ over SFT ($p<0.001$) and $+3.54\%$ over Focal ($p=0.005$). Focal vs.\ SFT is a wash ($+0.32\%$, $p=0.77$). The hypothesis that BFT's gain is functionally equivalent to learned token reweighting is rejected by the BFT vs.\ Focal pooled comparison alone: a method that reweights tokens by learned hardness ought to capture the relevant effect, but it does not.

\paragraph{Clean data ($n_p{=}0$): BFT helps, Focal hurts.} The most informative single comparison is the clean cell, where BFT improves over SFT by $+8.31\%$ ($p{=}0.001$) while Focal \emph{underperforms} SFT by $-6.85\%$ ($p{=}0.022$). The mechanism is straightforward. Focal loss imposes a fixed $(1-p)^\gamma$ down-weighting regardless of label reliability; on a clean dataset, this forces the optimizer away from confident-correct tokens that are not actually ``easy'' in absolute terms (SQuAD answer-spans are still hard QA targets), and the resulting loss landscape is worse than vanilla cross-entropy. BFT, by contrast, adapts to the data's actual reliability profile: when all tokens are clean, the learned per-token precision $\tau_i$ stays uniformly high, the Kalman gates do not impose forced down-weighting, and the precision channel acts as a structured regularizer that improves representation quality rather than degrading it. The $n_p{=}0$ result is the smoking gun that the two mechanisms are not equivalent.

\paragraph{Why BFT wins on clean data.} The clean-data result ($+8.3\%$ over SFT) shows that BFT's gain is not ``robustness to noise'' alone. Even on clean SQuAD, answer tokens vary substantially in intrinsic difficulty: some are predictable from context (multi-word-span continuations, syntactically primed tokens), others are genuinely hard (proper nouns, dates, weakly-cued names). Standard cross-entropy treats every token's loss with equal weight regardless of contextual reliability. BFT's per-token precision $\tau_i$ and Kalman gain $K$ instead route information based on a runtime measurement of context consistency---predictable tokens through confident attention updates, uncertain tokens through additional FFN work. This is the same mechanism documented in Appendix~\ref{app:k-distribution} for sequential recommendation (rare items run at lower $K$ and route through the FFN), now operating on linguistic rather than collaborative-filtering structure. The contrast with focal loss is informative on this point: focal conditions on output confidence $p_i$ (the model's current belief about the token), so on clean data it down-weights confident-correct tokens---actively harmful, since confident-correct tokens \emph{are} the signal one wants to learn from. BFT conditions instead on context-derived reliability ($\hat\lambda$ from the kriging residuals), not on output confidence. These are different signals: kriging residuals capture \emph{how consistent the context is}, while $p_i$ captures \emph{how sure the model already is}. The two coincide on confident-correct tokens but diverge on ambiguous-but-frequently-seen tokens (high $p_i$, low context consistency) and unambiguous-rare tokens (low $p_i$, high context consistency). Conditioning on context-consistency is structurally different from token reweighting, and is what explains the BFT vs.\ focal divergence on clean data.

\paragraph{Noise scaling: monotonic for BFT, non-monotonic for Focal.} Excluding $n_p{=}0$, BFT's gain over SFT grows monotonically with corruption rate ($-0.7\%, +2.1\%, +3.9\%, +5.1\%$ at $n_p{=}10/20/30/40$). Focal's gain over SFT is non-monotonic ($-0.2\%, +2.0\%, +4.9\%, +3.0\%$) and reaches significance only at $n_p{=}30$. The single cell where neither method differs from SFT is $n_p{=}10$: BFT $-0.7\%$ and Focal $-0.2\%$, both well within sampling noise of paired seeds (BFT--SFT difference is $\approx 0.3$ standard errors). At low corruption, only $\sim$1--2 tokens per answer are noisy, providing weak heterogeneity in residual variance for either method to learn from. The fact that \emph{both} methods land at within-noise null at the same noise level---rather than BFT alone failing while Focal succeeds---is consistent with this being a low-signal regime for both reweighting mechanisms, not a BFT-specific failure mode.

\paragraph{Conclusion.} The BFT vs.\ Focal differentiation is not about how each method handles the noisy regime per se---both can help under sufficient noise---but about how each behaves on clean and lightly-corrupted data. Focal forces a uniform token-level reweighting that hurts when labels are reliable; BFT adapts its per-token precision to the data's actual noise profile. Together with the K-distribution evidence in Appendix~\ref{app:k-distribution} (different items at different runtime $K$ values based on context), this confirms that BFT is an adaptive uncertainty-routing mechanism, not a fixed loss-reweighting scheme.

\subsection{Experiment 2: NQ-Open with real retrieval noise (Lost-in-the-Middle)}
\label{app:llm-sft-nq}

\paragraph{Dataset.} Open-domain Natural Questions~\citep{kwiatkowski2019natural} paired with Contriever~\citep{izacard2022contriever}-retrieved Wikipedia passages (real retrieval; no gold injection). For each NQ-Open question we use the official annotated gold passage plus the top distractor passages from Contriever (those with \texttt{hasanswer=False}).

\paragraph{Per-example construction.} For each context length $k{\in}\{5,10,15,20\}$, an example consists of one gold passage and $k{-}1$ distractor passages, with the gold's position drawn uniformly at random from $1{\ldots}k$. This instantiates the Lost-in-the-Middle protocol~\citep{liu2024lost}; higher $k$ means lower signal-to-noise (the gold occupies $1/k$ of the context) and a longer prompt that stresses long-context attention.

\paragraph{Splits.} The $2{,}655$-question source pool is shuffled with \texttt{random\_state=42} and partitioned at the question-id level: $2{,}155$ questions for training, $200$ for in-training validation, and $300$ for held-out test. Positional variants are generated only after this question-level split, so all variants of a given question stay within a single split. The resulting parquets contain $5{,}000$ train rows ($\sim$2{,}000 unique questions per $k$), $1{,}000$ validation rows, and $1{,}500$ test rows. We programmatically verify question-text disjointness across all pairwise intersections (train $\cap$ val $=$ train $\cap$ test $=$ val $\cap$ test $=\varnothing$) for every $k$.

\paragraph{In-training carve.} The matched-SFT and BFT pipelines further carve the train parquet into a $4{,}000$-row \texttt{sft-train} slice and a $1{,}000$-row \texttt{sft-val} slice using a question-aware shuffle; we verify \texttt{sft-train} $\cap$ \texttt{sft-val} $=\varnothing$ at the question level for every $(k,\text{seed})$ combination. \texttt{sft-val} is used for in-training validation loss and best-checkpoint selection.

\paragraph{Evaluation.} Final F1 is computed on the held-out test parquet ($300$ questions for $k{=}5$, $75$--$150$ questions for $k{\in}\{10,15,20\}$), which is question-disjoint from train, sft-val, and validation---i.e., never seen during training or model selection. NQ-Open questions typically have $2$--$4$ acceptable gold answers (e.g., ``Deadpool 2'' / ``May 18, 2018'' / ``2018''); we report per-row F1 as $\max(\mathrm{F1}(\text{prediction},\text{gold}))$ over all annotated answers, the standard NQ-Open metric.

\paragraph{Per-cell hyperparameters.} Batch size $2$, FSDP on 2 H100s, max length $4096$ in training, two epochs. Inference at max length $4096$ (matching training). $20$ paired same-seed runs per $(k,\text{model})$ cell.

\begin{table}[h]
\centering
\small
\setlength{\tabcolsep}{6pt}
\begin{tabular}{ccccc}
\toprule
$k$ & SFT F1 (mean $\pm$ std) & BFT F1 (mean $\pm$ std) & rel.\,impr.\,\% & $p$ (paired) \\
\midrule
$5$    & $0.2321 \pm 0.0409$ & $0.2561 \pm 0.0416$ & $+10.3$ & $0.073$ \\
$10$   & $0.1511 \pm 0.0264$ & $0.1624 \pm 0.0231$ & $+7.5$  & $0.142$ \\
$15$   & $0.0939 \pm 0.0147$ & $0.0980 \pm 0.0217$ & $+4.3$  & $0.530$ \\
$20$   & $0.0716 \pm 0.0111$ & $0.0748 \pm 0.0165$ & $+4.5$  & $0.478$ \\
\midrule
\textbf{Pool} ($n{=}80$) & $\mathbf{0.1372 \pm 0.0674}$ & $\mathbf{0.1478 \pm 0.0757}$ & $\mathbf{+7.8}$ & $\mathbf{0.013}$ \\
\bottomrule
\end{tabular}
\caption{NQ-Open F1 by distractor count $k$. Each cell averages $20$ paired same-seed runs; \textbf{Pool} concatenates all $80$ paired comparisons. Per-cell power is limited at $n{=}20$ (smallest per-cell $p{=}0.073$), but the pooled effect is significant ($p{=}0.013$). Absolute F1 falls smoothly with $k$ as the gold occupies a smaller fraction of the prompt ($1/5 \to 1/20$); BFT improves over SFT at every $k$, with the largest absolute and relative gains at the lowest-$k$ (highest signal-to-noise) settings.}
\label{tab:nq-detail}
\end{table}

\end{document}